\documentclass[11pt,reqno]{amsart}

\usepackage[T1]{fontenc}
\usepackage[utf8]{inputenc}
\usepackage{amsmath,amssymb,amsfonts,amsthm,bbm,mathrsfs}
\usepackage{array}
\usepackage{enumitem}
\usepackage{xcolor}
\usepackage{microtype}
\usepackage[hidelinks]{hyperref}
\allowdisplaybreaks
\numberwithin{equation}{section}

\newtheorem{tw}{Theorem}[section]
\newtheorem{corollary}[tw]{Corollary}
\newtheorem{proposition}[tw]{Proposition}
\newtheorem{lemma}[tw]{Lemma}
\theoremstyle{definition}
\newtheorem{definition}[tw]{Definition}
\newtheorem{example}[tw]{Example}
\theoremstyle{remark}
\newtheorem{remark}[tw]{Remark}

\newcommand\sA{\mathsf{A}}
\newcommand\bA{{\mathscr A}}
\newcommand\cE{{\mathcal E}}
\newcommand\bE{{\mathscr E}}
\newcommand\cF{{\mathcal F}}

\newcommand\bF{{\mathbf F}}

\newcommand{\bmu}{\boldsymbol{\mu}}
\newcommand\bpeA{{\mathscr A}_{\bullet|\bullet}}
\newcommand{\mI}[1]{\mathbbm{1}_{#1}}
\newcommand{\md}{\,\mathrm d}
\newcommand{\essinf}{\operatorname*{ess\,inf}}
\renewcommand\le{\leqslant}
\renewcommand\ge{\geqslant}
\newcommand{\Llambda}{\mathsf{\Lambda}}

\title[Context Localization for Generalized Level-Based Evaluation]{Context Localization for Generalized Level-Based Evaluation in Knowledge-Based Systems}
\thanks{Author preprint. This manuscript is not the publisher's version of record. The peer-reviewed version has been published in \emph{Knowledge-Based Systems}; see the \href{https://www.sciencedirect.com/science/article/pii/S0950705126015583}{published article on ScienceDirect}.}

\author{Ondrej Hutn\'{i}k}
\address{Institute of Mathematics, Faculty of Science, Pavol Jozef \v{S}af\'{a}rik University in Ko\v{s}ice, Jesenn\'{a} 5, 040 01 Ko\v{s}ice, Slovakia}
\email{ondrej.hutnik@upjs.sk}

\author{Nat\'{a}lia Pu\v{s}k\'{a}rov\'{a}}
\address{Institute of Mathematics, Faculty of Science, Pavol Jozef \v{S}af\'{a}rik University in Ko\v{s}ice, Jesenn\'{a} 5, 040 01 Ko\v{s}ice, Slovakia}
\email{natalia.puskarova@student.upjs.sk}

\keywords{knowledge-based systems, context localization, generalized level measure, conditional aggregation operator, monotone measure, evidence selection, structured uncertainty}
\hypersetup{
 pdftitle={Context Localization for Generalized Level-Based Evaluation in Knowledge-Based Systems},
 pdfauthor={Ondrej Hutnik and Natalia Puskarova}
}

\begin{document}
\begin{abstract}
We study context localization for generalized level-based evaluation in knowledge-based systems. The framework models situations where a structured nonnegative score, defined on facts, rules, cases, criteria or evidence units, is evaluated through conditional aggregation tests on admissible knowledge contexts. The generalized level measure maximizes a monotone set function over all contexts whose aggregated support reaches a prescribed level. We characterize when filtering the score by a context $B$ is equivalent to localizing the admissible contexts by intersection with $B$. The main theorem shows that this consistency holds for all monotone set functions if and only if two structural conditions are satisfied: monotonicity with respect to contexts and a reduction property excluding positive localized support outside $B$. We analyze pointwise and block-generated mechanisms producing the reduction property, extend the result to parameterized systems, and interpret it as a stability criterion for context-dependent evidence selection, non-additive support evaluation and level-based knowledge aggregation.
\end{abstract}

\maketitle

\section{Motivation: context localization in knowledge-based evaluation}

In many knowledge-based systems, an output score is not obtained from a single scalar value but from a structured collection of facts, rules, cases, criteria, expert statements, documents or evidence units. Such units are rarely evaluated in isolation. Their relevance depends on the active query, decision context, user profile, diagnostic regime, information source, semantic constraint or observation window. The resulting system-level evaluation should therefore depend not only on the local scores assigned to the units, but also on which subsets of the knowledge base are admissible as coherent contexts for evaluation. This viewpoint is compatible with fuzzy, non-additive and approximate-reasoning models, where uncertain or imprecise requirements are evaluated through nonlinear satisfaction and aggregation mechanisms; see, for instance, \cite{BellmanZadeh1970,Zimmermann1978}.

The localization problem considered in this paper can be stated informally as follows. If only a part $B$ of the knowledge base is relevant for the current task, one may either first filter the score function outside $B$ or, alternatively, evaluate admissible contexts in the original system and then localize their informational weight by intersection with $B$. These two procedures need not be equivalent for nonlinear conditional evaluation rules. Our goal is to characterize exactly when this equivalence holds.

The classical level-set construction already contains the basic localization principle in its simplest form. Let $f\colon X\to[0,\infty)$ be a nonnegative measurable score function and let $B\in\Sigma$ be the part of the domain that is active for the current task. For every $u>0$,
$$
\{x\in X:f(x)\mI{B}(x)\ge u\}
=
\{x\in X:f(x)\ge u\}\cap B.
$$
Consequently, for every monotone measure $\mu$ on the ambient measurable space,
$$
\mu\big(\{x\in X:f(x)\mI{B}(x)\ge u\}\big)
=
\mu\big(\{x\in X:f(x)\ge u\}\cap B\big),\qquad u>0.
$$
Thus, in the ordinary level-based setting, filtering the score outside $B$ before evaluation is equivalent to localizing the resulting level set by intersection with $B$.

The aim of the present paper is to identify when this elementary localization principle remains valid for generalized level measures generated by conditional aggregation operators. Let $\bA=\{\sA(\cdot|E):E\in\cE\}$ be a family of conditional aggregation operators in the sense of \cite{BHHK}, and let $\mu$ be a monotone measure on $\cE$. In a knowledge-based interpretation, the sets $E\in\cE$ are admissible knowledge contexts, $\sA(f|E)$ is the aggregated degree of support of the score function $f$ inside $E$, and $\mu(E)$ is a non-additive informational weight of the context. Following the generalized level-measure construction from \cite{BHKK}, for $u\ge0$, define
$$\Llambda_{\mu,\bA}(f,u)
:= \sup\{\mu(E): \sA(f|E)\ge u, E\in\cE\}.$$
This quantity may be interpreted as a level-constrained evidence selection value: at level $u$, one searches for admissible contexts $E$ on which the score passes the conditional aggregation test, and then takes the largest support, confidence or relevance mass $\mu(E)$.

The localized counterpart used in the paper is not obtained by simply replacing $\cE$ with the subcollection of sets contained in $B$. Instead, each admissible set $E$ is localized by intersection with $B$, i.e.,
$$\Llambda_{\mu,\bA}(f,u;B) :=
\sup\{\mu(E\cap B): \sA(f|E)\ge u, E\in\cE\}.$$
This is closer to the classical level-set identity, where the level set $\{f\ge u\}$ is replaced by its intersection with $B$. The central localization problem is therefore to determine when
$$\Llambda_{\mu,\bA}(f\mI{B},u)
= \Llambda_{\mu,\bA}(f,u;B), \quad u>0.$$
In words, we ask when filtering the score outside $B$ is equivalent to localizing the admissible knowledge contexts appearing in the generalized level measure.

This equality is automatic for the ordinary level measure, but it is not automatic for arbitrary conditional aggregation rules. Averaging-type tests, for instance, generally do not have this property, since zeros introduced outside $B$ may still affect an average computed on a set not contained in $B$. In contrast, infimum-type, essential-infimum-type, conjunctive, or suitable block-generated tests may possess a reduction mechanism: after multiplication by $\mI{B}$, a set not contained in $B$ cannot produce a positive level value. This is the role of the reduction property studied below.

A second structural requirement is also needed. Even if the masked signal eliminates sets outside $B$, one must still compare the value of the conditional aggregation test on a set $E$ with its value on the localized part $E\cap B$. This is controlled by monotonicity with respect to sets: enlarging the conditioning set should not increase the local evaluation value. The paper therefore first separates these two mechanisms. We discuss monotonicity with respect to sets in Section~\ref{subsec:antitonicity}, then the reduction property is investigated in Section~\ref{subsec:reduction-property}, and only afterwards combine them in the localization theorem Section~\ref{subsec:localization-theorem}.

A parameterized version is natural in knowledge-based systems because the active evaluation structure may depend on a query, decision context, user profile, resolution level or admissibility regime. Changing this external parameter may change the admissible collection, the monotone measure, or the conditional aggregation rule. More generally, one may consider a parameterized system
$$\mathfrak S_t=(\cE_t,\mu_t,\bA_t),\quad t\in T,$$
and the corresponding two-variable generalized level surface
$$\Llambda_{\mathfrak S}(f;t,u) =
\sup\{\mu_t(E) : \sA_t(f|E)\ge u, E\in\cE_t\},$$ cf. \cite{BHKK}. 
Here $t$ is the external parameter selecting the information structure and the local evaluation mechanism, while $u$ is the level threshold used to test the aggregated support. Keeping these two variables separate is essential: diagonal expressions such as $u=t$ may be useful later, but they do not contain enough information to recover the full structural conditions behind localization.

Thus, the motivating question can be summarized as follows. Given a collection $\cE$ of admissible sets and a family $\bA$ of conditional aggregation rules, which structural conditions ensure that, for every $f\in\bF$, every localizing set $B\in\cE^0$, every positive level $u>0$, and every monotone measure $\mu$ on $\cE$, the identity
$$\Llambda_{\mu,\bA}(f\mI{B},u)
= \Llambda_{\mu,\bA}(f,u;B)$$
holds? The answer given in the paper is that this identity is governed precisely by the combination of monotonicity with respect to sets and the reduction property. The parameterized version presented in Section~\ref{sec:parameterization} then applies the same principle pointwise in the external parameter and provides the basis for later interpretations in terms of level-dependent evaluation curves, stochastic processes, and level-dependent integral functionals.

\section{Preliminaries}

We start with a nonempty set $X$. Whenever measurability is relevant, we fix an ambient $\sigma$-algebra $\Sigma\subseteq 2^X$. The role of $\Sigma$ is only to specify which functions and sets are measurable. It is assumed to be sufficiently rich for the collections considered below.
Throughout the paper, the only standing assumption on $\cE\subseteq\Sigma$ is that it is a \textit{paving}, that is,
$\emptyset\in\cE.$
We write $\cE^0:=\cE\setminus\{\emptyset\}$. Additional structural assumptions on $\cE$, such as closure under intersections, will be imposed explicitly when needed. The distinction between $\Sigma$ and $\cE$ is intentional. The $\sigma$-algebra $\Sigma$ is the ambient measurable structure, whereas $\cE$ is the working collection of admissible sets used for conditioning and for the suprema defining generalized level measures. Thus, $\Sigma$ may be much larger than $\cE$. For instance, $\cE$ may consist only of unions of blocks of a partition, events observable at a fixed time, scenario classes, or level-dependent feasible sets. In purely order-theoretic or finite discrete situations, we shall usually take $\Sigma=2^X$. Hence, statements involving $\cE=2^X$ are understood in the discrete measurable setting, where all subsets of $X$ are measurable.

The class of all nonnegative bounded $\Sigma$-measurable functions on $X$ is denoted by $\bF$. For $f,g\in\bF$, we write $f\le g$ on $X$ if $f(x)\le g(x)$ for every $x\in X$. The indicator function of a set $B\in\Sigma$ is denoted by $\mI{B}$, where
$$\mI{B}(x)=
\begin{cases}
1, & \text{if } x\in B,\\
0, & \text{otherwise}.
\end{cases}
$$
Since $\cE\subseteq\Sigma$, for every $f\in\bF$ and every $B\in\cE$ the masked function $f\mI{B}$ again belongs to $\bF$. 

By a monotone measure on a collection $\cE$ we mean a nondecreasing set function $\mu\colon\cE\to[0,\infty]$ such that $\mu(\emptyset)=0$ and $\mu(E)>0$ for at least one $E\in\cE$. Thus, whenever $C,D\in\cE$ and $C\subseteq D$, one has $\mu(C)\le\mu(D)$. If $X\in\cE$, one may replace the last nontriviality condition by $\mu(X)>0$. For background on monotone measures, capacities and non-additive set functions, see \cite{GrabischBook,Pap1995,WangKlir2009}.

The conditional aggregation tests used in this paper are represented by conditional aggregation operators. Such operators can be viewed as local nonlinear evaluation rules; the specific notion used below was introduced in \cite{BHHK}. For general background on nonlinear aggregation of several inputs, see \cite{Grabisch_book2}.

\begin{definition}
Let $\cE\subseteq\Sigma$ be a collection. A family
$$
\bA=\{\sA(\cdot| E):E\in\cE\}
$$
is called a \textit{family of conditional aggregation operators}, briefly an FCA, if for every $E\in\cE^0$ the mapping $\sA(\cdot|E)\colon\bF\to[0,\infty]$ satisfies:
\begin{enumerate}[label=(C\arabic*),leftmargin=1.5cm]
\item $\sA(f|E)\le\sA(g|E)$ for all $f,g\in\bF$ such that $f\le g$ on $E$;
\item $\sA(\mI{E^c}|E)=0$, where $E^c=X\setminus E$.
\end{enumerate}
\end{definition}

\noindent For the empty set, we use the convention $\sA(\cdot|\emptyset)=+\infty$.
With this convention, $\emptyset$ may be included in level suprema without causing exceptional cases: it always satisfies the level inequality, but its monotone measure is $\mu(\emptyset)=0$. Although the family $\bA$ depends on $\cE,$ we will not indicate it in our notation when there is no doubt of confusion. 

The monotonicity axiom (C1) immediately implies the following locality property, which will be used repeatedly below.

\begin{lemma}\label{lem:locality}
If $f=g$ on $E\in\cE$, then $\sA(f| E)=\sA(g| E)$.
\end{lemma}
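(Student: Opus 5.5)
The plan is to apply axiom (C1) twice, once in each direction, and to handle the empty set separately.

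First I dispose of $E=\emptyset$. There the convention $\sA(\cdot|\emptyset)=+\infty$ makes both sides equal to $+\infty$ for any $f,g$, so the claim is trivial. From now on I assume $E\in\cE^0$.

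Suppose $f,g\in\bF$ agree on $E$, that is, $f(x)=g(x)$ for every $x\in E$. In particular $f\le g$ on $E$. Axiom (C1) only requires the pointwise inequality on the conditioning set $E$, not on all of $X$, so it gives $\sA(f|E)\le\sA(g|E)$. Swapping the roles of $f$ and $g$, the inequality $g\le f$ on $E$ also holds, and (C1) gives $\sA(g|E)\le\sA(f|E)$. The two inequalities together yield $\sA(f|E)=\sA(g|E)$ in $[0,\infty]$. Antisymmetry is valid in the extended half-line, so values equal to $+\infty$ cause no problem.

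There is no genuine obstacle. The only point needing care is to read (C1) as a hypothesis about the order of $f$ and $g$ restricted to $E$; that reading is exactly what makes the lemma immediate. Axiom (C2) is not needed here.
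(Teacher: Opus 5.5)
Your proposal is correct and matches the paper's proof: you apply (C1) in both directions on $E\in\cE^0$ and settle $E=\emptyset$ by the convention $\sA(\cdot|\emptyset)=+\infty$.
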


\begin{proof}
For $E\ne\emptyset$, both $f\le g$ and $g\le f$ hold on $E$, and the assertion follows from \textup{(C1)}. For $E=\emptyset$ it follows from the convention above.
\end{proof}

\section{Structural conditions for localization}
\label{sec:structural-conditions}

The purpose of this section is to isolate two structural properties of the conditional aggregation rules which will be used later in the study of localization. The first property is monotonicity with respect to sets. It requires that enlarging the conditioning set cannot increase the corresponding local level value. This condition is natural for infimum-type, conjunctive, and worst-case evaluation mechanisms, but it is not automatic for arbitrary nonconstant families. The second property is a reduction property. It describes what happens when a signal is localized by multiplication with $\mI{B}$. Informally, it requires that after such a localization, no set which is not contained in $B$ can still produce a positive level value. This property may follow from a pointwise annihilator, but it may also be forced by the structure of the underlying collection itself.

We discuss these two properties separately. This makes it possible to distinguish the order-theoretic mechanism, encoded by monotonicity with respect to sets, from the localization mechanism, encoded by the reduction property.

\subsection{Monotonicity with respect to sets}\label{subsec:antitonicity}

\begin{definition}\rm
Let $\cE\subseteq \Sigma$ be a collection. We say that an FCA $\bA=\{\sA(\cdot|E): E\in\cE\}$ is \textit{nonincreasing with respect to sets} if
\begin{equation}\label{eq:nonincreasing_wrt_sets}C,D\in\cE^0, \,\, C\subseteq D\quad\Longrightarrow\quad
\sA(f|D)\le \sA(f|C)
\end{equation}
for every $f\in\bF$.
\end{definition}

\begin{remark}
Although the empty set does not cause a formal difficulty in the definition above, due to the convention $\sA(\cdot|\emptyset)=\infty$, we shall not regard it as relevant when discussing monotonicity with respect to sets. Thus, whenever structural properties of an FCA are considered, the substantive comparisons are those between nonempty  sets. This convention avoids degenerate or pathological interpretations involving $\emptyset$ and keeps the order-theoretic conditions tied to genuine conditioning sets.
\end{remark}

A trivial way to obtain such a family is to use the same conditional aggregation rule on all sets. For instance, if
$$\sA^{\inf}(f|E):=\inf_{x\in E} f(x), \quad E\in\cE^0,$$
then $\bA^{\inf} = \{\sA^{\inf}(\cdot|E) : E\in\cE\}$ is nonincreasing with respect to sets. However, the conditional aggregation rules assigned to different sets from the collection need not be the same. What is required is their compatibility with the inclusion ordering of the underlying sets.

The following example shows that it is not sufficient to know that the individual conditional aggregation rules are, in isolation, nonincreasing under enlargement of their domains. The way in which different rules are assigned to different sets also matters.

\begin{example}
Let $X=\{1,2,3\}$, let $\cE^0=\{\{1,2\},\{1,2,3\}\}$, and consider functions $f\colon X\to[0,1]$. Define a family $\bA$ by
\begin{align*}\sA(f|\{1,2,3\}) & = \min\{f(1),f(2),f(3)\}, \\\sA(f|\{1,2\}) & = \mathrm{W}(f(1),f(2)),
\end{align*}
where $\mathrm{W}$ is the Lukasiewicz t-norm,
$\mathrm{W}(a,b)=\max\{a+b-1,0\}.$ Although both the minimum t-norm and the Lukasiewicz t-norm are nonincreasing with respect to the addition of further arguments, the mixed family $\bA$ is not nonincreasing with respect to sets. Indeed, for $f=(0.9,0.9,0.85)$ we have $\{1,2\}\subseteq \{1,2,3\}$, but
$\sA(f|\{1,2,3\}) =
0.85 > 0.8 = \sA(f|\{1,2\}).$ Thus, the implication~\eqref{eq:nonincreasing_wrt_sets} fails.
\end{example}

The example indicates the correct principle. If larger sets are assigned stronger, that is, pointwise smaller, local evaluation mechanisms, then the family becomes nonincreasing with respect to sets. This can be made precise for the following ordered class of conjunctive rules, see \cite{KMPbook}. 

Let $\mathfrak T$ be a class of triangular norms on $[0,1]$. We write $T_1\preceq T_2$ if $T_1(a,b)\le T_2(a,b)$ for all $a,b\in[0,1]$. For a t-norm $T$, let $T^{(n)}$ denote its standard $n$-ary extension, defined recursively by
$$T^{(1)}(x_1)=x_1, \qquad T^{(n+1)}(x_1,\ldots,x_{n+1}) = T\bigl(T^{(n)}(x_1,\ldots,x_n),x_{n+1}\bigr).
$$ By associativity and commutativity, the value of $T^{(n)}$ is independent of the order of the arguments.

\begin{proposition}
Let $X$ be finite, let $\cE\subseteq 2^X$ be a paving, and let $\tau\colon \cE^0\to\mathfrak T$ be an assignment of t-norms to nonempty sets such that
$C\subseteq D \Longrightarrow \tau(D)\preceq \tau(C)$ for all $C,D\in\cE^0$. For $E\in\cE^0$, put
$$\sA_\tau(f| E):=\tau(E)^{(|E|)}\bigl((f(x))_{x\in E}\bigr),
\qquad f\colon X\to[0,1].$$
Then $\bA_\tau:=\{\sA_\tau(\cdot| E):E\in\cE\}$ is nonincreasing with respect to sets.
\end{proposition}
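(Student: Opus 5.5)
Fix $C,D\in\cE^0$ with $C\subseteq D$ and $f\colon X\to[0,1]$. We must show $\sA_\tau(f|D)\le\sA_\tau(f|C)$. The plan chains two inequalities through the intermediate quantity $\tau(D)^{(|C|)}\bigl((f(x))_{x\in C}\bigr)$. The first step drops the extra arguments for a fixed t-norm. The second compares the two t-norms on the common set $C$ using the ordering $\tau(D)\preceq\tau(C)$.

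\emph{Step 1 (fixed t-norm, enlarging the argument list).} Write $T=\tau(D)$, $m=|C|$ and $n=|D|$. If $m=n$ then $C=D$ by finiteness and there is nothing to prove, so assume $m<n$. By the independence of $T^{(n)}$ from the order of arguments, we may list the elements of $D$ with those of $C$ first, say $x_1,\dots,x_m\in C$ and $x_{m+1},\dots,x_n\in D\setminus C$. From the recursive definition,
$$T^{(k+1)}(a_1,\dots,a_{k+1})=T\bigl(T^{(k)}(a_1,\dots,a_k),a_{k+1}\bigr)\le T^{(k)}(a_1,\dots,a_k).$$
This uses the standard fact that $T(a,b)\le\min(a,b)\le a$. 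That fact follows from monotonicity of $T$ and the boundary condition $T(a,1)=a$, since $T(a,b)\le T(a,1)=a$ for $b\le1$. Iterating from $k=m$ to $k=n-1$ gives
$$T^{(n)}\bigl((f(x))_{x\in D}\bigr)\le T^{(m)}\bigl((f(x))_{x\in C}\bigr).$$

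\emph{Step 2 (comparing t-norms on the same set).} We need the $n$-ary extensions to inherit the pointwise order: if $T_1\preceq T_2$, then $T_1^{(m)}\le T_2^{(m)}$ pointwise. I would prove this by induction on $m$. The case $m=1$ is trivial. For the inductive step,
$$T_1^{(m+1)}(a_1,\dots,a_{m+1})=T_1\bigl(T_1^{(m)}(\cdot),a_{m+1}\bigr)\le T_1\bigl(T_2^{(m)}(\cdot),a_{m+1}\bigr)\le T_2\bigl(T_2^{(m)}(\cdot),a_{m+1}\bigr).$$
The first inequality uses the induction hypothesis together with monotonicity of $T_1$ in its first argument. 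The second uses $T_1\preceq T_2$. Applying this with $T_1=\tau(D)$ and $T_2=\tau(C)$ on the arguments $(f(x))_{x\in C}$, and combining with Step 1, yields $\sA_\tau(f|D)\le\sA_\tau(f|C)$.

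The argument is elementary; the only point requiring care is the induction in Step 2. There one must use monotonicity of the \emph{smaller} t-norm before applying the pointwise comparison, so that the two inequalities chain in the right direction. A side remark: the definition in the statement only specifies $\sA_\tau$ for $f$ with values in $[0,1]$. The inequality $\sA_\tau(f|D)\le\sA_\tau(f|C)$ is exactly what \eqref{eq:nonincreasing_wrt_sets} requires on this domain, so no further verification beyond these two steps is needed.
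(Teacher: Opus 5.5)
Your proof is correct and rests on the same two facts as the paper's: the pointwise order $\preceq$ is inherited by $n$-ary extensions of equal arity (by induction), and appending arguments cannot increase a t-norm's value. The only difference is the order of the chain: the paper passes through $\tau(C)^{(|D|)}\bigl((f(x))_{x\in D}\bigr)$ (compare t-norms on $D$, then drop arguments), whereas you pass through $\tau(D)^{(|C|)}\bigl((f(x))_{x\in C}\bigr)$.

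Your closing caveat is stronger than necessary. In the induction you may equally apply $T_1\preceq T_2$ first and then use monotonicity of the larger t-norm $T_2$ in its first argument, which is how the paper argues.
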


\begin{proof}
Let $C,D\in\cE^0$ with $C\subseteq D$. Put $T_D=\tau(D)$ and $T_C=\tau(C)$. By assumption, $T_D\preceq T_C$. This order is inherited by the finite extensions of the same arity: for every $m\in\mathbb N$ and every $(x_1,\ldots,x_m)\in[0,1]^m$,
$$
T_D^{(m)}(x_1,\ldots,x_m)\le T_C^{(m)}(x_1,\ldots,x_m).
$$
Indeed, this follows by induction on $m$ from the recursive definition of the finite extensions and the monotonicity of $T_C$.
Applying this with $m=|D|$ gives
$$
T_D^{(|D|)}\bigl((f(x))_{x\in D}\bigr)\le T_C^{(|D|)}\bigl((f(x))_{x\in D}\bigr).
$$
Since $C\subseteq D$ and every t-norm satisfies $T(a,b)\le a$ and $T(a,b)\le b$, adding further arguments cannot increase the value of the finite extension. Hence
$$
T_C^{(|D|)}\bigl((f(x))_{x\in D}\bigr)\le T_C^{(|C|)}\bigl((f(x))_{x\in C}\bigr).
$$
Combining the two inequalities yields
$\sA_\tau(f|D)\le\sA_\tau(f|C).$ Since $f$ was arbitrary, $\bA_\tau$ is nonincreasing with respect to sets.
\end{proof}

\begin{example}
On the standard scale of t-norms, one has the pointwise ordering
$$\mathrm{D}\preceq \mathrm{W}\preceq \Pi\preceq \mathrm{M},$$
where $\mathrm{D}$, $\mathrm{W}$, $\Pi$, and $\mathrm{M}$ denote, respectively, the drastic, Lukasiewicz, product, and minimum t-norms, cf. \cite{KMPbook}. Thus, the assignment $E\mapsto\tau(E)$ has to reverse the inclusion order of the admissible sets: whenever $C\subseteq D$, the rule assigned to $D$ must be pointwise smaller than the rule assigned to $C$. For instance, if $C\subseteq D$, then $\tau(D)=\mathrm{W}\preceq\mathrm{M}=\tau(C)$, and consequently $\sA_\tau(f|D)\le \sA_\tau(f|C)$ for every $f\colon X\to[0,1]$.
\end{example}

The preceding construction can be stated more generally. Suppose that we have a collection
$$\mathscr{B}=\{\bA^\alpha : \alpha\in I\} \,\,\,\textrm{with}\,\,\, \bA^\alpha = \{\sA^\alpha(\cdot|E) : E\in\cE^0\},$$
where $(I,\le_I)$ is a partially ordered set. Assume that each $\bA^\alpha$ is nonincreasing with respect to sets and that the collection $\mathscr{B}$ is ordered pointwise in the sense: for all $\alpha,\beta\in I$, 
$$\alpha\le_I \beta \quad\Longrightarrow\quad \sA^\alpha(f|E)\le \sA^\beta(f|E)$$
for every $E\in\cE^0$ and every $f\in\bF$.

\begin{proposition}\label{prop:selection}
Let $(I,\le_I)$ be a partially ordered set and let
$\mathscr{B}=\{\bA^\alpha:\alpha\in I\}$ be a pointwise ordered collection of local evaluation families, each nonincreasing with respect to sets. 
Let $\gamma\colon \cE^0\to I$ be a nonincreasing selector, that is,
$C\subseteq D \Longrightarrow
\gamma(D)\le_I\gamma(C).$
Define
$$\sA_\gamma(f|E) := \sA^{\gamma(E)}(f|E), \quad E\in\cE^0.
$$
Then $\bA_\gamma = \{\sA_\gamma(\cdot|E) : E\in\cE\}$ is nonincreasing with respect to sets.
\end{proposition}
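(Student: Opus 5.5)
The plan is to mirror the proof of the preceding proposition on t-norm assignments, replacing the t-norm scale by the abstract pointwise ordered collection $\mathscr{B}$. Fix $f\in\bF$ and $C,D\in\cE^0$ with $C\subseteq D$, and write $\alpha:=\gamma(C)$ and $\beta:=\gamma(D)$. Because $\gamma$ is a nonincreasing selector, we have $\beta\le_I\alpha$. The aim is the chain
\begin{equation*}
\sA_\gamma(f|D)=\sA^{\beta}(f|D)\le \sA^{\alpha}(f|D)\le \sA^{\alpha}(f|C)=\sA_\gamma(f|C).
\end{equation*}

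The first inequality should come from the pointwise ordering of $\mathscr{B}$. Applying it with the indices $\beta\le_I\alpha$ and the fixed conditioning set $D\in\cE^0$ gives $\sA^\beta(f|D)\le\sA^\alpha(f|D)$. This plays the role of the step ``$T_D^{(m)}\le T_C^{(m)}$'' in the t-norm proposition: it changes the rule while keeping the set fixed.

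The second inequality should come from the assumption that the single family $\bA^\alpha$ is nonincreasing with respect to sets. Applied to $C\subseteq D$, both in $\cE^0$, it yields $\sA^\alpha(f|D)\le\sA^\alpha(f|C)$. This is the analogue of ``adding arguments cannot increase the t-norm'': it changes the set while keeping the rule fixed.

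The two equalities at the ends of the chain are just the definition of $\sA_\gamma$. Since $f$, $C$ and $D$ were arbitrary, the chain gives \eqref{eq:nonincreasing_wrt_sets} for $\bA_\gamma$.

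One more point must be addressed: that $\bA_\gamma$ really is an FCA, so that the conclusion applies to a legitimate family. This is immediate, because for each $E\in\cE^0$ the operator $\sA_\gamma(\cdot|E)$ coincides with $\sA^{\gamma(E)}(\cdot|E)$, which satisfies (C1) and (C2). The empty set is handled by the convention $\sA_\gamma(\cdot|\emptyset)=+\infty$.

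I do not expect a genuine obstacle here. The only care needed is in the second step: one must pass through the intermediate quantity $\sA^\alpha(f|D)$, which uses the rule of the smaller set evaluated on the larger set, rather than $\sA^\beta(f|C)$. Going through $\sA^\beta(f|C)$ would also work, since $\sA^\beta(f|D)\le\sA^\beta(f|C)\le\sA^\alpha(f|C)$. Either route needs both hypotheses on $\mathscr{B}$, and I will use the first one.
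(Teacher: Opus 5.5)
Your proposal is correct and is essentially the paper's proof. You pass through the same intermediate quantity $\sA^{\gamma(C)}(f|D)$, using the pointwise ordering at the fixed set $D$ and then the fact that the single family $\bA^{\gamma(C)}$ is nonincreasing with respect to sets. Your extra remarks on the FCA axioms and the alternative route through $\sA^{\gamma(D)}(f|C)$ are also valid.
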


\begin{proof}
Let $C,D\in\cE^0$ with $C\subseteq D$. Since $\gamma$ is nonincreasing, $\gamma(D)\le_I \gamma(C)$. By the pointwise ordering,
$$\sA^{\gamma(D)}(f|D)\le \sA^{\gamma(C)}(f|D).$$
Since $\bA^{\gamma(C)}$ is nonincreasing with respect to sets,
$$\sA^{\gamma(C)}(f|D) \le \sA^{\gamma(C)}(f|C).$$
Therefore, $\sA_\gamma(f|D)\le \sA_\gamma(f|C)$, which proves the claim.
\end{proof}

\begin{remark}
Proposition~\ref{prop:selection} gives a general selection principle for constructing nonconstant families of conditional aggregation rules. One may start with a collection of families which are ordered pointwise by the parameter order $\le_I$ and which are already known to be nonincreasing with respect to sets. Then, for each set $E$, one selects one family from this collection. The resulting family remains nonincreasing with respect to sets provided that the selector is compatible with inclusion in the order-reversing sense: larger sets must be assigned pointwise smaller local mechanisms.
\end{remark}

\subsection{Reduction property and its structural mechanisms}
\label{subsec:reduction-property}

We now discuss the reduction property used later in the localization theorem. Its role is to ensure that, after a signal is masked by $\mI{B}$, no set outside $B$ can still produce a positive level value. We also record several structural mechanisms which imply this property.

As in the case of monotonicity with respect to sets, the empty set does not play a substantive role. Indeed, $\emptyset\subseteq B$ for every $B\in\cE$, so the implication defining the reduction property is vacuous for $E=\emptyset$. We therefore formulate the condition only for nonempty conditioning sets, where the elimination mechanism is nontrivial.

\begin{definition}\label{def:property-R}
Let $\cE\subseteq \Sigma$ be a collection. We say that an FCA $\bA=\{\sA(\cdot|E): E\in\cE\}$ has the \emph{reduction property}, or simply property \textup{(R)}, if, for every $f\in\bF$ and every $B,E\in\cE^0$,
$$E\not\subseteq B \quad\Longrightarrow\quad \sA(f\mI{B}| E)=0.$$
\end{definition}

Equivalently,
$\sA(f\mI{B}| E)>0 \Longrightarrow E\subseteq B.$ Thus, property \textup{(R)} says that, after the function has been masked by $\mI{B}$, a strictly positive aggregated value can occur only on sets contained in $B$. In the language of localized level functions, masking the signal outside $B$ automatically removes all sets which are not contained in $B$.

\begin{lemma}\label{lem:R-positive-level}
If $\bA$ has property \textup{(R)}, then, for all $f\in\bF$, $B,E\in\cE^0$, and $u>0$,
$$\sA(f\mI{B}| E)\ge u \quad\Longrightarrow\quad E\subseteq B.$$
\end{lemma}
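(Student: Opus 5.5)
The plan is to argue by contraposition, using only Definition~\ref{def:property-R} together with the strict positivity of $u$.

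Fix $f\in\bF$, sets $B,E\in\cE^0$, and a level $u>0$, and suppose that $\sA(f\mI{B}|E)\ge u$. Since $u>0$, this immediately gives $\sA(f\mI{B}|E)>0$; note that the value lies in $[0,\infty]$, so even the value $+\infty$ is strictly positive.

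Now assume, for contradiction, that $E\not\subseteq B$. Because $B$ and $E$ are both nonempty members of $\cE$, property \textup{(R)} applies and yields $\sA(f\mI{B}|E)=0$. This is incompatible with $\sA(f\mI{B}|E)\ge u>0$, so we must have $E\subseteq B$.

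There is no genuine obstacle here. The only point that needs care is that the hypothesis $u>0$ is essential: for $u=0$ the inequality $\sA(f\mI{B}|E)\ge 0$ holds trivially and carries no information about $E$. The empty set is excluded by assumption, so the convention $\sA(\cdot|\emptyset)=+\infty$ plays no role.
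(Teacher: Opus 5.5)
Your proof is correct and essentially identical to the paper's: assuming $E\not\subseteq B$, property \textup{(R)} forces $\sA(f\mI{B}|E)=0$, which contradicts $\sA(f\mI{B}|E)\ge u>0$. You announce contraposition but actually argue by contradiction; this is harmless.
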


\begin{proof}
If $E\not\subseteq B$, then property \textup{(R)} gives $\sA(f\mI{B}| E)=0$, which is impossible when $\sA(f\mI{B}| E)\ge u>0$.
\end{proof}

\paragraph{The pointwise annihilator mechanism}
The simplest way to guarantee property \textup{(R)} is to require that a single zero value inside the conditioning set annihilates the local level value.

\begin{definition}\label{def:pointwise-annihilator}
We say that $0$ is a \emph{pointwise annihilator} of an FCA $\bA$ if, for every $E\in\cE^0$ and every $f\in\bF$,
$$\bigl(\exists x\in E:\ f(x)=0\bigr)\quad\Longrightarrow\quad \sA(f| E)=0.$$
\end{definition}

\begin{proposition}\label{prop:annihilator-implies-R}
If $0$ is a pointwise annihilator of $\bA$, then $\bA$ has property \textup{(R)}.
\end{proposition}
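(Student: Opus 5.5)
The argument is a direct verification of Definition~\ref{def:property-R}, and it only needs a suitable witness point. Fix $f\in\bF$ and $B,E\in\cE^0$ with $E\not\subseteq B$. Since $E\not\subseteq B$, we can choose a point $x_0\in E\setminus B$. The masked function $g:=f\mI{B}$ belongs to $\bF$, as noted in the preliminaries, because $B\in\cE\subseteq\Sigma$. At the chosen point we have $g(x_0)=f(x_0)\mI{B}(x_0)=0$, since $x_0\notin B$.

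Thus $g$ vanishes at the point $x_0$, and this point lies in the nonempty conditioning set $E$. The hypothesis that $0$ is a pointwise annihilator (Definition~\ref{def:pointwise-annihilator}), applied to $g$ and $E$, then gives $\sA(f\mI{B}|E)=\sA(g|E)=0$. This is exactly the implication required by property~\textup{(R)}. Since $f$, $B$ and $E$ were arbitrary, $\bA$ has property~\textup{(R)}.

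There is no real obstacle here. The only points to check are bookkeeping ones. First, $E$ must be nonempty, so that the annihilator condition applies; this holds because Definition~\ref{def:property-R} quantifies only over $E\in\cE^0$. Second, the masked function $f\mI{B}$ must lie in $\bF$, so that it is an admissible argument of $\sA(\cdot|E)$; this was verified above. Neither axiom (C1) nor (C2), nor Lemma~\ref{lem:locality}, is needed for this argument.
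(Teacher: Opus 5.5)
Your proof is correct and follows essentially the same approach as the paper: choose $x_0\in E\setminus B$, note that $(f\mI{B})(x_0)=0$, and apply the pointwise annihilator property on the nonempty set $E$. The bookkeeping you add (that $E\in\cE^0$ and that $f\mI{B}\in\bF$) is left implicit in the paper's argument but is accurate.
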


\begin{proof}
Let $f\in\bF$ and $B,E\in\cE^0$. Suppose that $E\not\subseteq B$. Then there exists $x\in E\setminus B$, and hence $(f\mI{B})(x)=0$. By the pointwise annihilator property, $\sA(f\mI{B}| E)=0$. Thus, $\bA$ has property \textup{(R)}.
\end{proof}

\begin{example}\label{ex:min-product-annihilator}
Assume that $X$ is finite. The conditional aggregation rules
$$\sA^{\min,p}(f|E)=\left(\min_{x\in E}f(x)\right)^p,\quad p>0,$$
have $0$ as a pointwise annihilator. Consequently, the corresponding FCA has property \textup{(R)}. On the scale $[0,1]$, the same observation applies to conditional aggregation rules induced by triangular norms. If $T$ is a t-norm and $T^{(|E|)}$ denotes its standard finite extension, then
$$\sA^T(f| E) = T^{(|E|)}\bigl((f(x))_{x\in E}\bigr), \qquad f\colon X\to[0,1],$$
has $0$ as a pointwise annihilator, since every t-norm satisfies $T(a,0)=0$. In particular, for $E=\{i_1,\ldots,i_m\}$,
$$\sA^{\Pi}(f|E)  = \prod_{\ell=1}^m f(i_\ell),\qquad
\sA^{\mathrm{W}}(f|E) = \max\left\{\sum_{\ell=1}^m f(i_\ell)-(m-1),0\right\}$$
both satisfy property \textup{(R)}.
\end{example}

The pointwise annihilator can also be imposed on an arbitrary FCA by a simple filtering construction.

\begin{proposition}\label{prop:annihilator-filtering}
Let $X$ be finite and let $\bA^0=\{\sA^0(\cdot|E) : E\in\cE\}$ be an FCA. For $E\in\cE^0$, define
$$\sA(f|E) = \begin{cases}\sA^0(f|E), & \textrm{if } f(x)>0 \text{ for every }x\in E,\\
0, & \text{otherwise.}
\end{cases}$$
Then $\bA=\{\sA(\cdot|E) : E\in\cE\}$ is an FCA, $0$ is a pointwise annihilator of $\bA$, and hence $\bA$ has property \textup{(R)}.
\end{proposition}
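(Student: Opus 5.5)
We verify the two FCA axioms for the filtered family directly and then read off the annihilator property from the definition; property (R) then follows from Proposition~\ref{prop:annihilator-implies-R}. Throughout, fix $E\in\cE^0$. For $E=\emptyset$ the convention $\sA(\cdot|\emptyset)=+\infty$ is kept, so only nonempty $E$ need attention.

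For (C1), let $f,g\in\bF$ with $f\le g$ on $E$. We split according to whether $f$ is strictly positive on $E$.
\begin{itemize}
\item If some $x\in E$ has $f(x)=0$, then $\sA(f|E)=0\le\sA(g|E)$, because all values lie in $[0,\infty]$.
\item If $f(x)>0$ for every $x\in E$, then $g(x)\ge f(x)>0$ on $E$ as well. Hence both values are given by $\sA^0$, and (C1) for $\bA^0$ yields $\sA(f|E)=\sA^0(f|E)\le\sA^0(g|E)=\sA(g|E)$.
\end{itemize}
This ordering of cases is the only point requiring care: positivity on $E$ passes upward from $f$ to $g$, so the ``mixed'' case ($f$ filtered through $\sA^0$ while $g$ is set to $0$) cannot occur.

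For (C2), we consider $\mI{E^c}$, which vanishes at every point of $E$, and $E$ is nonempty. The indicator is therefore not strictly positive on $E$, and the second branch of the definition gives $\sA(\mI{E^c}|E)=0$ directly. Alternatively, (C2) for $\bA^0$ gives the same value, though it is not needed. Hence $\bA$ is an FCA.

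Finally, if $f(x)=0$ for some $x\in E$, then $f$ is not strictly positive on $E$, so $\sA(f|E)=0$ by definition. This is exactly Definition~\ref{def:pointwise-annihilator}, and Proposition~\ref{prop:annihilator-implies-R} then yields property (R).

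No step is a genuine obstacle. Finiteness of $X$ is not actually used, beyond ensuring that the functions involved lie in $\bF$.
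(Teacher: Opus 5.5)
Your proof is correct and follows the paper's argument essentially verbatim: the same case split on whether $f$ vanishes somewhere on $E$ for (C1), the observation that $\mI{E^c}$ vanishes on the nonempty set $E$ for (C2), and then Proposition~\ref{prop:annihilator-implies-R} for property (R). Your side remark that finiteness of $X$ plays no role in the argument is also accurate.
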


\begin{proof}
We verify \textup{(C1)}. Let $f,g\in\bF$, $E\in\cE^0$ and assume that $f\le g$ on $E$. If $f(x)=0$ for some $x\in E$, then $\sA(f|E)=0\le \sA(g|E)$. If $f(x)>0$ for every $x\in E$, then also $g(x)>0$ for every $x\in E$, and therefore
$$\sA(f|E) = \sA^0(f|E) \le \sA^0(g|E) = \sA(g|E),$$
by \textup{(C1)} for $\sA^0$. For \textup{(C2)}, note that $\mI{E^c}(x)=0$ for every $x\in E$, and hence $\sA(\mI{E^c}| E)=0$. Thus, $\bA$ is an FCA. The pointwise annihilator property is immediate from the definition of $\sA$, and property \textup{(R)} follows from Proposition~\ref{prop:annihilator-implies-R}.
\end{proof}

A concrete instance of Proposition~\ref{prop:annihilator-filtering} is
$$\sA(f|E)=\left(\max_{x\in E}f(x)\right)\prod_{x\in E}\mI{\{f(x)>0\}},$$
where $E$ is finite. The additional factor enforces the pointwise annihilator.

\paragraph{Point-separating collections}
The pointwise annihilator is sufficient for property \textup{(R)}, but it is not necessary in general. It becomes necessary whenever the collection is rich enough to separate single points inside the admissible sets.

\begin{definition}\label{def:point-separating}
We say that the collection $\cE\subseteq \Sigma$ is \emph{point-separating} if, for every $E\in\cE^0$ and every $x\in E$, there exists $B\in\cE^0$ such that
$x\notin B$ and $E\cap B=E\setminus\{x\}.$
\end{definition}

Thus, a collection $\cE$ is point-separating if every point of every nonempty set can be removed by intersecting it with another nonempty set. This property itself does not require $\cE$ to be closed under intersections. However, in the finite discrete setting, point-separation becomes very restrictive once closure under finite intersections and the condition $X\in\cE$ are imposed. A collection $\cE\subseteq\Sigma$ is said to be \textit{intersection-stable} if it satisfies
\begin{equation}\tag{E}\label{eq:intersection-stability}
C,D\in\cE \quad\Longrightarrow\quad C\cap D\in\cE.
\end{equation}

\begin{proposition}\label{prop:point-separating-finite-collapse}
Let $X=[n]$, $n\ge 2$, and let $\cE\subseteq 2^X$ be an intersection-stable and point-separating collection. 
If $X\in\cE$, then $\cE=2^X$.
\end{proposition}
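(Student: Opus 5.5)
The plan is to show that every subset of $X=[n]$ lies in $\cE$ by removing points from $X$ one at a time. Since $\cE$ is intersection-stable and every finite subset is a finite intersection of complements of points, it would suffice to show $X\setminus\{x\}\in\cE$ for each $x\in X$. Indeed, for any $A\subseteq X$,
$$A=\bigcap_{x\in X\setminus A}\bigl(X\setminus\{x\}\bigr),$$
with the convention that the empty intersection is $X\in\cE$. So $A\in\cE$ by repeated use of \eqref{eq:intersection-stability}. The empty set is in $\cE$ anyway, since $\cE$ is a paving; it also arises as $\bigcap_{x\in X}(X\setminus\{x\})$.

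The key step is the following observation. Take $E=X\in\cE^0$ and any $x\in X$. Point-separation gives $B\in\cE^0$ with $x\notin B$ and $X\cap B=X\setminus\{x\}$. Since $X\cap B=B$, this means $B=X\setminus\{x\}$, so $X\setminus\{x\}\in\cE$. This uses only point-separation applied to the single set $X$ and the hypothesis $X\in\cE$. The requirement that $B$ be nonempty is consistent with this, because $n\ge2$ ensures $X\setminus\{x\}\neq\emptyset$.

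Combining the two steps yields $2^X\subseteq\cE$, and the reverse inclusion is trivial, so $\cE=2^X$. I expect no real obstacle. The only point needing care is the bookkeeping of the empty intersection and of $\emptyset$ itself, which is handled by the paving assumption. Note also that intersection-stability is what propagates membership from the sets $X\setminus\{x\}$ to all subsets. Alternatively, one could induct downward on $|A|$, applying point-separation to $A$ itself and intersecting. That route needs nonemptiness of the separating set whenever $|A|\ge2$, so the direct argument above is cleaner.
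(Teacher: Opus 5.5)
Your proposal is correct and follows the paper's proof essentially verbatim: you apply point-separation to $E=X$ to get $X\setminus\{x\}\in\cE$, then write each proper subset as the finite intersection $\bigcap_{x\in X\setminus A}(X\setminus\{x\})$ and use intersection-stability. Your extra bookkeeping for $A=X$ and $A=\emptyset$ is harmless and matches the paper's separate treatment of the case $E=X$.
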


\begin{proof}
We first show that $X\setminus\{x\}\in\cE$ for every $x\in X$. Fix $x\in X$. Applying point-separation to $E=X$ and $x\in X$, we obtain $B_x\in\cE^0$ such that $x\notin B_x$ and $X\cap B_x=X\setminus\{x\}$. Since $X\cap B_x=B_x$, it follows that $B_x=X\setminus\{x\}\in\cE$. Now let $E\subseteq X$. If $E=X$, then $E\in\cE$. If $E\neq X$, then
$$E = \bigcap_{x\in X\setminus E} \bigl(X\setminus\{x\}\bigr).$$
The intersection is finite, and each set $X\setminus\{x\}$ belongs to $\cE$. By \textup{(E)}, we obtain $E\in\cE$. Hence $\cE=2^X$.
\end{proof}

\begin{remark}
The assumption $X\in\cE$ cannot be omitted. For instance, if $X=\{1,2,3\}$, then
$$\cE=\{\emptyset,\{1\},\{2\},\{3\},\{1,2\},\{1,3\},\{2,3\}\}$$
is intersection-stable and point-separating, but $X\notin\cE$. Thus, $\cE\neq 2^X$.
\end{remark}

\begin{proposition}\label{prop:R-iff-annihilator-point-separating}
Assume that the collection $\cE\subseteq \Sigma$ is point-separating. Then an FCA $\bA=\{\sA(\cdot|E) : E\in\cE\}$ has property \textup{(R)} if and only if $0$ is a pointwise annihilator of $\bA$.
\end{proposition}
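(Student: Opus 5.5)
The sufficiency direction is already available: if $0$ is a pointwise annihilator of $\bA$, then Proposition~\ref{prop:annihilator-implies-R} gives property \textup{(R)}, and this does not use point-separation. So the proof reduces to the converse. Assume that $\cE$ is point-separating and that $\bA$ has property \textup{(R)}. We must show that for every $E\in\cE^0$, every $f\in\bF$ and every $x\in E$ with $f(x)=0$, one has $\sA(f|E)=0$.

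Fix such $E$, $f$ and $x$. By Definition~\ref{def:point-separating} there is $B\in\cE^0$ with $x\notin B$ and $E\cap B=E\setminus\{x\}$. Then $x\in E\setminus B$, so $E\not\subseteq B$, and property \textup{(R)} yields $\sA(f\mI{B}|E)=0$.

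It remains to show that $f\mI{B}$ and $f$ agree on $E$, so that Lemma~\ref{lem:locality} gives $\sA(f|E)=\sA(f\mI{B}|E)=0$. Take $y\in E$:
\begin{itemize}
\item if $y\ne x$, then $y\in E\setminus\{x\}=E\cap B$, so $(f\mI{B})(y)=f(y)$;
\item if $y=x$, then $(f\mI{B})(x)=0=f(x)$, since $x\notin B$ and $f(x)=0$ by assumption.
\end{itemize}
Hence $f=f\mI{B}$ on $E$, which completes the argument.

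The step needing the most care is this pointwise agreement on $E$. It is exactly where both the hypothesis $f(x)=0$ and the precise form $E\cap B=E\setminus\{x\}$ of point-separation are used; a weaker separation such as merely $x\notin B$ would not suffice. Measurability is not an issue, because $B\in\cE\subseteq\Sigma$ guarantees $f\mI{B}\in\bF$, as noted in the preliminaries.
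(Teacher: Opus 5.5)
Your proof is correct and follows essentially the same route as the paper: sufficiency comes from Proposition~\ref{prop:annihilator-implies-R}, and for the converse you use point-separation to choose $B$ with $E\cap B=E\setminus\{x\}$, note that $f\mI{B}=f$ on $E$, and combine Lemma~\ref{lem:locality} with property \textup{(R)}. Your explicit case check for the agreement on $E$ spells out what the paper states in one line.
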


\begin{proof}
The implication from the pointwise annihilator to \textup{(R)} follows from Proposition~\ref{prop:annihilator-implies-R}. Conversely, assume that $\bA$ has property \textup{(R)}. Let $E\in\cE^0$, let $x_0\in E$, and let $f\in\bF$ be such that $f(x_0)=0$. By point-separation, there exists $B\in\cE^0$ such that $x_0\notin B$ and $E\cap B=E\setminus\{x_0\}$. In particular, $E\not\subseteq B$. Moreover, $f\mI{B}$ and $f$ coincide on $E$: outside $B$, the only point of $E$ is $x_0$, and $f(x_0)=0$. Hence, by Lemma~\ref{lem:locality},
$$\sA(f|E)=\sA(f\mI{B}|E).$$ 
Since $E\not\subseteq B$, property \textup{(R)} gives $\sA(f\mI{B}|E)=0$. Therefore $\sA(f|E)=0$, and so $0$ is a pointwise annihilator of $\bA$.
\end{proof}

\paragraph{The role of the underlying collection}
If $\cE$ is not point-separating, property \textup{(R)} may still hold without a pointwise annihilator. In such cases the structure of $\cE$ itself may force the required reduction.

\begin{proposition}\label{prop:disjoint-collection-R}
Assume that the nonempty members of $\cE$ are pairwise disjoint, that is, $C\cap D=\emptyset$ for all $C,D\in\cE^0$ with $C\neq D$. Then every FCA $\bA=\{\sA(\cdot| E):E\in\cE\}$ has property \textup{(R)}.
\end{proposition}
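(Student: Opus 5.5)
The plan is to verify the implication in Definition~\ref{def:property-R} directly, by splitting into cases according to how $E$ and $B$ meet. Fix $f\in\bF$ and $B,E\in\cE^0$ with $E\not\subseteq B$. Then $E\neq B$, since otherwise $E\subseteq B$ would hold trivially. By the pairwise disjointness of the nonempty members of $\cE$, this forces $E\cap B=\emptyset$.

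The key observation is that, because $E\cap B=\emptyset$, the masked function $f\mI{B}$ vanishes identically on $E$. The aim is to compare $f\mI{B}$ with the indicator $\mI{E^c}$, which also vanishes on $E$. On $E$ we have
$$f\mI{B}=0=\mI{E^c},$$
so Lemma~\ref{lem:locality} gives $\sA(f\mI{B}|E)=\sA(\mI{E^c}|E)$. Axiom (C2) then yields $\sA(\mI{E^c}|E)=0$, and hence $\sA(f\mI{B}|E)=0$. This is exactly property (R).

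Alternatively, one could use (C1) together with nonnegativity: since $f\mI{B}\le \mI{E^c}$ on $E$, the value $\sA(f\mI{B}|E)$ lies between $0$ and $\sA(\mI{E^c}|E)=0$. There is no genuine obstacle here. The only point needing care is to confirm that $\mI{E^c}\in\bF$, which holds because $E\in\cE\subseteq\Sigma$. The empty set needs no separate treatment, since property (R) is formulated only for $B,E\in\cE^0$.
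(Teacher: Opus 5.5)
Your proof is correct and matches the paper's argument essentially verbatim: from $E\not\subseteq B$ you get $E\neq B$, hence $E\cap B=\emptyset$ by disjointness, so $f\mI{B}$ and $\mI{E^c}$ both vanish on $E$, and Lemma~\ref{lem:locality} together with \textup{(C2)} gives $\sA(f\mI{B}|E)=0$. Your explicit remark that $E\neq B$ only spells out a step the paper leaves implicit.
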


\begin{proof}
Let $B,E\in\cE^0$ and suppose that $E\not\subseteq B$. Since the nonempty members of $\cE$ are pairwise disjoint, this implies $E\cap B=\emptyset$. Therefore $f\mI{B}=0$ on $E$. By Lemma~\ref{lem:locality} and \textup{(C2)}, $\sA(f\mI{B}|E) = \sA(\mI{E^c}|E) = 0$. Thus, $\bA$ has property \textup{(R)}.
\end{proof}

More interesting examples occur when the admissible sets are generated by blocks. In the following construction the blocks are indexed, and this indexing determines the order in which the block values are inserted into the outer evaluation. Thus no symmetry of the outer evaluation is required. If one wants the construction to be independent of the labelling of the blocks, then symmetry of the outer evaluation should be imposed.

\begin{proposition}\label{prop:block-mechanism-R}
Let $X=[n]$ and let $\mathcal P=\{D_1,\ldots,D_k\}$ be a partition of $X$. Put
$$\cE = \left\{\bigcup_{j\in J}D_j:J\subseteq[k]\right\}.$$
Let $\widetilde{\bA}=\{\widetilde{\sA}(\cdot|D_j) : j\in[k]\}$ be a family of conditional aggregation rules on the blocks. Let $\Gamma_m\colon [0,\infty]^m\to[0,\infty]$, $m=1,\ldots,k$, be nondecreasing in every coordinate and have $0$ as an annihilator, that is, if
$z_\ell=0$ for some $\ell$, then $\Gamma_m(z_1,\ldots,z_m)=0.$
For $E\in\cE^0$, write
$$
J(E)=\{j\in[k]:D_j\subseteq E\}=\{j_1<\cdots<j_m\},
$$
and define
$$
\sA(f|E)
=
\Gamma_m\left(
\widetilde{\sA}(f|D_{j_1}),\ldots,\widetilde{\sA}(f|D_{j_m})
\right).
$$
Then $\bA=\{\sA(\cdot|E) : E\in\cE\}$ is an FCA and has property \textup{(R)}.
\end{proposition}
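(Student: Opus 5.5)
The plan is to verify the two FCA axioms and then property (R), all directly from the block structure. Since $\mathcal P$ is a partition, every $E\in\cE^0$ is the disjoint union $E=\bigcup_{j\in J(E)}D_j$, and $J(E)\neq\emptyset$. So $m=|J(E)|\in[k]$ and $\sA(\cdot|E)$ is well defined. Throughout I assume that the block rules $\widetilde{\sA}(\cdot|D_j)$ satisfy (C1) and (C2) on their blocks; this is what "conditional aggregation rules on the blocks" should mean.

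\textbf{(C1).} Let $f\le g$ on $E$. Then $f\le g$ on each $D_{j_\ell}\subseteq E$, so (C1) for the block rules gives
\[
\widetilde{\sA}(f|D_{j_\ell})\le\widetilde{\sA}(g|D_{j_\ell}),\qquad \ell=1,\ldots,m.
\]
The index list $j_1<\cdots<j_m$ depends only on $E$, so both values are fed into the same $\Gamma_m$ in the same order. Coordinatewise monotonicity of $\Gamma_m$ then yields $\sA(f|E)\le\sA(g|E)$.

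\textbf{(C2).} The function $\mI{E^c}$ vanishes on $E$, and hence on $D_{j_1}$. By Lemma~\ref{lem:locality} applied to the block rule, together with (C2) for it,
\[
\widetilde{\sA}(\mI{E^c}|D_{j_1})=\widetilde{\sA}(\mI{D_{j_1}^c}|D_{j_1})=0.
\]
Since $0$ is an annihilator of $\Gamma_m$, we get $\sA(\mI{E^c}|E)=0$. This step is the only place where locality must be applied at the level of the blocks. That requires Lemma~\ref{lem:locality} to hold for $\widetilde{\bA}$, which follows from (C1) for the blocks exactly as in its proof.

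\textbf{Property (R).} Let $B,E\in\cE^0$ with $E\not\subseteq B$. Both sets are unions of blocks and the blocks are disjoint, so some block $D_{j_\ell}\subseteq E$ must satisfy $D_{j_\ell}\not\subseteq B$, and then in fact $D_{j_\ell}\cap B=\emptyset$. Hence $f\mI{B}=0$ on $D_{j_\ell}$. The same locality and (C2) argument as above gives
\[
\widetilde{\sA}(f\mI{B}|D_{j_\ell})=0,
\]
and the annihilator property of $\Gamma_m$ gives $\sA(f\mI{B}|E)=0$.

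The main point needing care is the dichotomy that a block is either contained in $B$ or disjoint from it. This is exactly where the partition structure of $\cE$ replaces a pointwise annihilator. Everything else is routine.
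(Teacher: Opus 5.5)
Your proposal is correct and follows the paper's proof essentially step by step. (C1) comes from blockwise monotonicity plus coordinatewise monotonicity of $\Gamma_m$, and both (C2) and (R) come from locality and (C2) on a block where the function vanishes, combined with the annihilator property of $\Gamma_m$, using the same contained-or-disjoint dichotomy for blocks. The only cosmetic differences are that you use a single zero input in (C2), whereas the paper observes that all inputs vanish, and that you explicitly check $J(E)\neq\emptyset$ and that locality for the block rules follows from their (C1).
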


\begin{proof}
Let $E\in\cE^0$ and write $J(E)=\{j_1<\cdots<j_m\}$. Let $f,g\in\bF$ and suppose that $f\le g$ on $E$. Then $f\le g$ on every block $D_{j_r}\subseteq E$. Hence
$$\widetilde{\sA}(f|D_{j_r})\le \widetilde{\sA}(g|D_{j_r}),
\qquad r=1,\ldots,m,$$
by \textup{(C1)} for $\widetilde{\sA}$. Since $\Gamma_m$ is nondecreasing in every coordinate, we obtain
$\sA(f|E)\le \sA(g|E).$ Thus, \textup{(C1)} holds.

For \textup{(C2)}, take $f=\mI{E^c}$. If $D_{j_r}\subseteq E$, then $\mI{E^c}=0$ on $D_{j_r}$. By Lemma~\ref{lem:locality} and \textup{(C2)} for $\widetilde{\sA}$, we have
$$\widetilde{\sA}(\mI{E^c}|D_{j_r})=0, \qquad r=1,\ldots,m.$$
Thus, all inputs of $\Gamma_m$ are equal to $0$, and the annihilator property gives $\sA(\mI{E^c}| E)=0.$ Hence, $\bA$ is an FCA.

It remains to prove \textup{(R)}. Let $B,E\in\cE^0$ and assume that $E\not\subseteq B$. Since both $B$ and $E$ are unions of blocks, there is a block $D_{j_0}$ such that
$D_{j_0}\subseteq E$ and $D_{j_0}\cap B=\emptyset$. 
Consequently, $f\mI{B}=0$ on $D_{j_0}$. By Lemma~\ref{lem:locality} and \textup{(C2)} for $\widetilde{\sA}$, we get
$$
\widetilde{\sA}(f\mI{B}|D_{j_0})=0.
$$
Thus, one of the inputs entering the outer evaluation is equal to $0$. Since $0$ is an annihilator of $\Gamma_m$, we obtain $\sA(f\mI{B}|E)=0.$ Therefore, $\bA$ has property \textup{(R)}.
\end{proof}

\begin{example}\label{ex:block-averages}
Let $X=[n]$ and let $\mathcal P=\{D_1,\ldots,D_k\}$ be a partition of $X$. Put
$$\cE = \left\{\bigcup_{j\in J}D_j:J\subseteq[k]\right\},$$
and for $E\in\cE^0$ define
$$\sA^{\rm block}(f|E) = \min\left\{\frac1{|D_j|}\sum_{i\in D_j}f(i) : D_j\subseteq E\right\}.$$ Then $\bA^{\rm block}$ is an FCA and has property \textup{(R)}. Moreover, it is nonincreasing with respect to sets: if $C\subseteq D$, then the minimum defining $\sA^{\rm block}(f|D)$ is taken over at least as many block averages as the minimum defining $\sA^{\rm block}(f|C)$.

If at least one block $D_j$ contains more than one element, then $0$ is not a pointwise annihilator of $\bA^{\rm block}$. Indeed, take $E=D_j$, choose $x_0\in D_j$, and define $f(x_0)=0$ and $f(x)=1$ for every $x\in D_j\setminus\{x_0\}$. Then
$$\sA^{\rm block}(f|E) = \frac{|D_j|-1}{|D_j|} > 0.$$
Thus, property \textup{(R)} may hold even when the pointwise annihilator fails.
\end{example}

The annihilator assumption on the outer evaluation is essential in this mechanism.

\begin{example}\label{ex:block-failure-max}
Let $X=\{1,2,3,4\}$, $D_1=\{1,2\}$, $D_2=\{3,4\}$, and let $\cE=\{\emptyset,D_1,D_2,X\}$. Define
$$\sA^{\max\textrm{-block}}(f|E) = \max\left\{\frac1{|D_j|}\sum_{i\in D_j}f(i) : D_j\subseteq E \right\}.$$
This is an FCA, but it does not have property \textup{(R)}. Indeed, take
$$B=D_1,\qquad E=X,\qquad f=(2,3,4,5).$$
Then $f\mI{B}=(2,3,0,0)$, and hence
$$\frac1{|D_1|}\sum_{i\in D_1}(f\mI{B})(i)=\frac52, \qquad \frac1{|D_2|}\sum_{i\in D_2}(f\mI{B})(i)=0.$$
Therefore,
$$\sA^{\max\textrm{-block}}(f\mI{B}| X)=\frac52>0,$$
although $X\not\subseteq B$. Thus, property \textup{(R)} fails.
\end{example}

\paragraph{Almost-everywhere versions}
In measure-theoretic applications, exact pointwise annihilation is often too strong. A zero on a null set should usually be ignored. This leads to an almost-everywhere version of property \textup{(R)}.

\begin{remark}\label{rem:essential-infimum-R}
Let $(X,\Sigma,\lambda)$ be a Lebesgue measure space and let
$$\sA^{\rm essinf}(f|E) = \operatorname*{ess\,inf}_{x\in E} f(x).$$
Then $\sA^{\rm essinf}$ is nonincreasing with respect to sets. It does not have a pointwise annihilator, because values on $\lambda$-null sets are ignored. However, for every $u>0$,
$$\sA^{\rm essinf}(f\mI{B}|E)\ge u \quad\Longrightarrow\quad \lambda(E\setminus B)=0.$$
Indeed, if $\lambda(E\setminus B)>0$, then $f\mI{B}=0$ on a subset of $E$ of positive $\lambda$-measure, and therefore
$\sA^{\rm essinf}(f\mI{B}| E)=0.$
Thus, the exact reduction property is replaced by the almost-everywhere reduction
$$\sA^{\rm essinf}(f\mI{B}|E)\ge u>0 \quad\Longrightarrow\quad E\subseteq B \quad \text{modulo }\lambda\text{-null sets}.$$
Consequently, the order-theoretic localization theorem has a natural measure-theoretic analogue whenever the underlying sets and monotone measures are considered modulo null sets, or whenever the relevant set functions are insensitive to null modifications. This is in line with the treatment of null-additive set functions in non-additive measure theory \cite{Pap1995}.
\end{remark}

\begin{remark}
The reduction property may arise from different mechanisms. It is implied by the pointwise annihilator, it is equivalent to the pointwise annihilator for point-separating collections, it is automatic for pairwise disjoint collections, and it can be produced by block-generated structures through an outer evaluation with $0$ as an annihilator. In measure-theoretic settings, exact reduction is naturally replaced by reduction modulo null sets, as in the case of the essential infimum.
\end{remark}

\section{The localization theorem}\label{subsec:localization-theorem}

We now return to generalized level measures and show how the two structural conditions introduced above interact. The aim is to compare two ways of localizing the level analysis to a fixed set $B$: one may either first mask the signal, replacing $f$ by $f\mI{B}$, or one may keep the original signal $f$ and localize the admissible sets in the supremum by intersection with $B$. The localization theorem below identifies conditions under which these two procedures are equivalent: monotonicity with respect to sets controls the passage from larger admissible sets to their localized parts, while the reduction property ensures that the masked signal cannot produce positive level values on sets outside $B$.

For an FCA $\bA$ and a monotone measure $\mu$ on $\cE$, define the \textit{generalized level measure}
\begin{equation}\label{eq:generalized-level}
\Llambda_{\mu,\bA}(f,u)
:= \sup\{\mu(E):\sA(f|E)\ge u,\ E\in\cE\}, \quad u\ge 0,
\end{equation} introduced in \cite{BHHK}. For $B\in\cE^0$ define its localized version by
\begin{equation}\label{eq:localized-level}
\Llambda_{\mu,\bA}(f,u;B)
:= \sup\{\mu(E\cap B):\sA(f|E)\ge u,\ E\in\cE\}, \quad u\ge 0.
\end{equation}

\begin{tw}\label{thm:main-nonparametric}
Let $\cE$ be an intersection-stable collection and let $\bA$ be an FCA on $\cE$. The following conditions are equivalent:
\begin{enumerate}
\item[\textup{(i)}] for every $f\in\bF$, every $B\in\cE^0$, every $u>0$ and every monotone measure $\mu$ on $\cE$, one has
\begin{equation}\label{eq:S-main}
\Llambda_{\mu,\bA}(f\mI{B},u)=\Llambda_{\mu,\bA}(f,u;B);
\end{equation}
\item[\textup{(ii)}] the FCA $\bA$ is nonincreasing with respect to sets and has property \textup{(R)}.
\end{enumerate}
\end{tw}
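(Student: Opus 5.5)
The plan is to prove the two implications separately. For (ii)$\Rightarrow$(i) I compare the two suprema directly, using Lemma~\ref{lem:locality} and the two structural conditions. For (i)$\Rightarrow$(ii) I test identity \eqref{eq:S-main} against carefully chosen $\{0,1\}$-valued monotone measures. Intersection-stability (E) is used only to make sure that $E\cap B\in\cE$, so that $\mu(E\cap B)$ is defined and $E\cap B$ is an admissible competitor.

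\emph{(ii)$\Rightarrow$(i).} Fix $f$, $B\in\cE^0$, $u>0$ and $\mu$.

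For the inequality ``$\le$'', take $E\in\cE$ with $\sA(f\mI{B}|E)\ge u$. If $E=\emptyset$ the contribution is $0$. Otherwise Lemma~\ref{lem:R-positive-level} gives $E\subseteq B$. Then $f\mI{B}=f$ on $E$, so Lemma~\ref{lem:locality} yields $\sA(f|E)\ge u$. Since $\mu(E)=\mu(E\cap B)$, the set $E$ is admissible on the right-hand side with the same value.

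For the inequality ``$\ge$'', take $E$ with $\sA(f|E)\ge u$. If $E\cap B=\emptyset$ the contribution is $\mu(\emptyset)=0$. Otherwise $E\cap B\in\cE^0$ by (E). Monotonicity with respect to sets and locality (since $f\mI{B}=f$ on $E\cap B$) give
\[
\sA(f\mI{B}|E\cap B)=\sA(f|E\cap B)\ge\sA(f|E)\ge u .
\]
Hence $\mu(E\cap B)$ is bounded by the left-hand side.

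\emph{(i)$\Rightarrow$ property (R).} I argue by contradiction. Suppose $B,E\in\cE^0$, $E\not\subseteq B$, and $\sA(f\mI{B}|E)>0$; pick a finite level $u$ with $0<u\le\sA(f\mI{B}|E)$. Define $\mu(F)=1$ if $F\not\subseteq B$ and $\mu(F)=0$ otherwise, for $F\in\cE$.
\begin{itemize}
\item[$\cdot$] $\mu$ is monotone: if $F\subseteq G$ and $F\not\subseteq B$, then $G\not\subseteq B$.
\item[$\cdot$] $\mu(\emptyset)=0$, and $\mu(E)=1$, so $\mu$ is a monotone measure.
\item[$\cdot$] The left-hand side of \eqref{eq:S-main} is at least $\mu(E)=1$.
\item[$\cdot$] The right-hand side is $0$, because every $F\cap B\subseteq B$.
\end{itemize}
This contradicts (i), so (R) holds.

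\emph{(i)$\Rightarrow$ monotonicity with respect to sets.} Let $C\subseteq D$ be in $\cE^0$ and $f\in\bF$; if $\sA(f|D)=0$ there is nothing to prove. Take any finite $u$ with $0<u\le\sA(f|D)$, and use $B=C$ with the measure $\mu(F)=1$ if $F\supseteq C$, else $0$. This $\mu$ is monotone, satisfies $\mu(\emptyset)=0$ because $C\neq\emptyset$, and is nontrivial.

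The right-hand side of \eqref{eq:S-main} is at least $\mu(D\cap C)=\mu(C)=1$. Hence the left-hand side equals $1$, i.e. the supremum of $\{0,1\}$-values is attained. So there is $E\supseteq C$ with $\sA(f\mI{C}|E)\ge u$.

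By the already established (R) and Lemma~\ref{lem:R-positive-level}, $E\subseteq C$, hence $E=C$. Locality then gives $\sA(f|C)=\sA(f\mI{C}|C)\ge u$. Letting $u\uparrow\sA(f|D)$, including the case $\sA(f|D)=\infty$, yields $\sA(f|D)\le\sA(f|C)$.

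The main obstacle is this last step. A measure that detects $C$ on its own would have to be supported exactly on $C$, which is not monotone. The way around it is to prove (R) first and then use (R) to force the witnessing set $E\supseteq C$ to coincide with $C$.
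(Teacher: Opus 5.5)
Your proposal is correct and follows essentially the same route as the paper. For (ii)$\Rightarrow$(i) you compare the two suprema directly; the only differences are that you treat $E\cap B=\emptyset$ separately instead of invoking the convention $\sA(\cdot|\emptyset)=\infty$, and you apply locality on $E\cap B$ instead of first reducing the left-hand side. For (i)$\Rightarrow$(ii) you use the same $\{0,1\}$-valued test measures $F\mapsto\mI{\{F\not\subseteq B\}}$ and $F\mapsto\mI{\{C\subseteq F\}}$, proving (R) first and then using it to force the witnessing set to equal $C$.
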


\begin{proof}
(ii)$\Rightarrow$(i) Fix $f\in\bF$, $B\in\cE^0$, $u>0$ and a monotone measure $\mu$ on $\cE$. We first reduce the left-hand side. If $E\in\cE$ satisfies $\sA(f\mI{B}|E)\ge u$, then either $E=\emptyset$, in which case $E\subseteq B$, or $E\in\cE^0$, in which case Lemma~\ref{lem:R-positive-level} gives $E\subseteq B$. By Lemma~\ref{lem:locality}, $\sA(f\mI{B}|E)=\sA(f|E)$ whenever $E\subseteq B$. Hence,
\begin{equation}\label{eq:L-reduced}
\Llambda_{\mu,\bA}(f\mI{B},u)=\sup\{\mu(E):E\subseteq B,\ \sA(f|E)\ge u,\ E\in\cE\}.
\end{equation}

Let $E\in\cE$ satisfy $\sA(f|E)\ge u$ and put $F:=E\cap B$. Since $\cE$ is intersection-stable, $F\in\cE$. Since $F\subseteq E$, monotonicity with respect to sets gives
$\sA(f|E)\le\sA(f|F),$
where the case $F=\emptyset$ is covered by the convention $\sA(\cdot|\emptyset)=\infty$. Hence $\sA(f|F)\ge u$. Since $F\subseteq B$, the set $F$ contributes to the reduced supremum~\eqref{eq:L-reduced}, and therefore
$$\mu(E\cap B)=\mu(F)\le \Llambda_{\mu,\bA}(f\mI{B},u).$$
Taking the supremum over all $E\in\cE$ satisfying $\sA(f|E)\ge u$ gives
$$\Llambda_{\mu,\bA}(f,u;B)\le \Llambda_{\mu,\bA}(f\mI{B},u).$$

Conversely, let $F\in\cE$ contribute to the reduced supremum~\eqref{eq:L-reduced}. Choose $E:=F$. Then $E\cap B=F$ and $\sA(f|E)\ge u$, so
$$
\mu(F)\le \Llambda_{\mu,\bA}(f,u;B).
$$
Taking the supremum in \eqref{eq:L-reduced} gives
$$
\Llambda_{\mu,\bA}(f\mI{B},u)\le \Llambda_{\mu,\bA}(f,u;B).
$$
Thus, $\Llambda_{\mu,\bA}(f\mI{B},u)=\Llambda_{\mu,\bA}(f,u;B)$.\medskip 

(i)$\Rightarrow$(ii) First we prove property \textup{(R)}. Suppose that, for some $f\in\bF$, $B\in\cE^0$ and $E_0\in\cE^0$, we have
$$\sA(f\mI{B}| E_0)>0
\quad\text{and}\quad
E_0\not\subseteq B.
$$
Choose $u>0$ such that $\sA(f\mI{B}| E_0)\ge u$; if the value is infinite, take any $u>0$. Define
$$
\mu_B(F)=
\begin{cases}
0,&F\subseteq B,\\
1,&F\not\subseteq B,
\end{cases}
\qquad F\in\cE.
$$
This is a monotone measure on $\cE$. Since $E_0$ contributes to the left-hand side of \eqref{eq:S-main} and $\mu_B(E_0)=1$, the left-hand supremum is equal to $1$. On the right-hand side, every set has the form $F\cap B\subseteq B$, hence $\mu_B(F\cap B)=0$. The right-hand supremum is therefore $0$, a contradiction. Thus, no such $E_0$ exists, and property \textup{(R)} holds.

It remains to prove monotonicity. Let $C,D\in\cE^0$ with $C\subseteq D$, and fix $f\in\bF$. We show that $\sA(f| D)\le\sA(f| C)$. If $\sA(f| D)=0$, there is nothing to prove. If $0<\sA(f| D)<\infty$, put $u:=\sA(f| D)$. If $\sA(f| D)=\infty$, let $u>0$ be arbitrary; the following argument then gives $\sA(f| C)\ge u$ for every $u>0$, which implies $\sA(f| C)=\infty$.

For such a finite positive $u$, or for an arbitrary $u>0$ in the infinite case, define
$$\mu_C(F):=\mI{\{C\subseteq F\}},
\quad F\in\cE.$$
This is a monotone measure on $\cE$ because $C\ne\emptyset$. Apply \eqref{eq:S-main} with $B=C$. Since $D$ contributes to the right-hand side, $\sA(f|D)\ge u$, and $D\cap C=C$, the right-hand supremum is equal to $1$. Hence the left-hand supremum is equal to $1$. As $\mu_C$ is $\{0,1\}$-valued, there exists $E\in\cE^0$ such that
$\sA(f\mI{C}| E)\ge u$ and $C\subseteq E.$ By Lemma~\ref{lem:R-positive-level}, applied with $B=C$, we get $E\subseteq C$. Therefore $E=C$. By Lemma~\ref{lem:locality},
$$\sA(f|C)=\sA(f\mI{C}|C)\ge u.$$
In the finite positive case this yields $\sA(f|C)\ge\sA(f|D)$. In the infinite case it holds for every $u>0$, hence $\sA(f|C)=\infty$. Thus, $\bA$ is nonincreasing with respect to sets.
\end{proof}

In point-separating intersection-stable collections, the reduction property admits a simpler interpretation. In this case, property \textup{(R)} is equivalent to the existence of a pointwise annihilator. Consequently, once monotonicity with respect to sets is assumed, the localization identity can be characterized directly in terms of the pointwise behaviour of the conditional aggregation rules.

\begin{corollary}
\label{cor:point-separating-S-annihilator}
Assume that $\cE$ is intersection-stable and point-separating, and that $\bA$ is nonincreasing with respect to sets. Then the following assertions are equivalent:
\begin{enumerate}[label=\textup{(\roman*)},leftmargin=1.2cm]
\item $0$ is a pointwise annihilator of $\bA$;
\item $\bA$ has property \textup{(R)};
\item the localization identity~\eqref{eq:S-main} holds for every $f\in\bF$, every $B\in\cE^0$, every $u>0$, and every monotone measure $\mu$ on $\cE$.
\end{enumerate}
\end{corollary}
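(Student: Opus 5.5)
The corollary should follow by combining the two earlier characterizations; no new construction is needed. I will prove the cycle (i)$\Leftrightarrow$(ii) and (ii)$\Leftrightarrow$(iii) separately.

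For (i)$\Leftrightarrow$(ii), I invoke Proposition~\ref{prop:R-iff-annihilator-point-separating}. Its only hypothesis is that $\cE$ is point-separating, and this is assumed. It says exactly that an FCA on such a collection has property \textup{(R)} if and only if $0$ is a pointwise annihilator. Neither intersection-stability nor monotonicity with respect to sets is used in this step.

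For (ii)$\Leftrightarrow$(iii), I apply Theorem~\ref{thm:main-nonparametric}, which requires only that $\cE$ be intersection-stable. The theorem states that (iii) holds if and only if $\bA$ is nonincreasing with respect to sets and has property \textup{(R)}.
\begin{itemize}
\item If (ii) holds, then together with the standing assumption that $\bA$ is nonincreasing with respect to sets, condition (ii) of the theorem is satisfied, and (iii) follows.
\item Conversely, if (iii) holds, the implication (i)$\Rightarrow$(ii) of the theorem gives property \textup{(R)} directly. The monotonicity it also yields is redundant here.
\end{itemize}

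The only point requiring care is the bookkeeping of hypotheses. Intersection-stability is needed solely for invoking Theorem~\ref{thm:main-nonparametric}. Point-separation is needed solely for Proposition~\ref{prop:R-iff-annihilator-point-separating}. The monotonicity assumption is needed only for the direction (ii)$\Rightarrow$(iii). There is no genuine obstacle. One may remark that, by Proposition~\ref{prop:point-separating-finite-collapse}, in the finite case with $X\in\cE$ the collection is forced to be $2^X$, but this is not used in the proof.
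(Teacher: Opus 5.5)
Your proposal is correct and matches the paper's proof: (i)$\Leftrightarrow$(ii) comes from Proposition~\ref{prop:R-iff-annihilator-point-separating} via point-separation, and (ii)$\Leftrightarrow$(iii) comes from Theorem~\ref{thm:main-nonparametric} via intersection-stability together with the assumed monotonicity with respect to sets. Your bookkeeping of which hypothesis is used in which step is also accurate.
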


\begin{proof}
The equivalence of \textup{(i)} and \textup{(ii)} follows from Proposition~\ref{prop:R-iff-annihilator-point-separating}. Since $\bA$ is assumed to be nonincreasing with respect to sets, the equivalence of \textup{(ii)} and \textup{(iii)} follows from Theorem~\ref{thm:main-nonparametric}.
\end{proof}

\begin{remark}
The restriction $u>0$ in the localization identity is essential. At level $u=0$, all admissible sets satisfy the level condition because the local evaluation values are nonnegative. Thus, masking by $\mI{B}$ does not exclude sets outside $B$ at the zero level, and the reduction property cannot force the desired localization.
\end{remark}

\section{Localization in parameterized systems}\label{sec:parameterization}

We now pass from a single collection and a single local evaluation system to a parameterized family of such objects, cf.~\cite{BHKK}. The parameter may represent time, scale, a clinical regime, a confidence level, or any other external variable which changes the observable sets, the underlying monotone measure, or the local evaluation mechanism. The important point is that this external parameter must be kept separate from the level variable used in the generalized level measure.

Let $T$ be a non-empty parameter set. A parameterized system is a family of triples
$$\mathfrak S_t=(\cE_t,\mu_t,\bA_t),
\quad t\in T,$$
where each $\cE_t$ is an  intersection-stable collection, each $\mu_t$ is a monotone measure on $\cE_t$, and
$$\bA_t=\{\sA_t(\cdot| E):E\in\cE_t\}$$
is an FCA on $\cE_t$. We write
$$\bE=(\cE_t)_{t\in T},
\qquad \bmu=(\mu_t)_{t\in T},
\qquad \bpeA=(\bA_t)_{t\in T}.$$
As before, the convention $\sA_t(\cdot|\emptyset)=\infty$ is used when $\emptyset\in\cE_t$.
For $f\in\bF$, $t\in T$ and $u\ge0$, define the parameterized generalized level surface by
\begin{equation}\label{eq:param-surface}
\Llambda_{\mathfrak S}(f;t,u)
:=
\sup\{\mu_t(E):\sA_t(f| E)\ge u,
\ E\in\cE_t\}.
\end{equation}
For $B\in\cE_t^0$ define the corresponding localized surface by
\begin{equation}\label{eq:param-local-surface}
\Llambda_{\mathfrak S}(f;t,u;B)
:=
\sup\{\mu_t(E\cap B):\sA_t(f| E)\ge u,
\ E\in\cE_t\}.
\end{equation}
The level surface may be defined also at $u=0$. However, the localization identity below is stated for $u>0$, since the reduction property eliminates sets outside $B$ only at positive levels.

\begin{definition}\label{def:param-antitone-R}
Fix $t\in T$. We say that the parameterized FCA $\bpeA$
\begin{enumerate}[label=(\roman*),leftmargin=1.2cm]
\item  is \emph{nonincreasing with respect to sets at $t$} if, for all $C,D\in\cE_t$ with $C\subseteq D$ and $f\in\bF$,
$$\sA_t(f| D)\le\sA_t(f| C);$$
\item has the \emph{reduction property at $t$}, denoted \textup{(R$_t$)}, if, for all $f\in\bF$ and $B,E\in\cE_t^0$,
$$E\not\subseteq B
\quad\Longrightarrow\quad
\sA_t(f\mI{B}| E)=0.$$
\end{enumerate}
\end{definition}

Thus, the two structural conditions from the non-parameterized setting are required only after the parameter has been fixed. No compatibility between different parameter values is needed for the pointwise localization statement. 

\begin{tw}\label{thm:parametric}
Let $t\in T$ be fixed. The following conditions are equivalent:
\begin{enumerate}[label=(\roman*),leftmargin=1.2cm]
\item for every $f\in\bF$, every $B\in\cE_t^0$, every $u>0$ and every monotone measure $\nu_t$ on $\cE_t$,
\begin{equation}\label{eq:param-S}
\begin{aligned}
\sup\{\nu_t(E):\sA_t(f\mI{B}| E)\ge u, E\in\cE_t\}
&=\\
\sup\{\nu_t(E\cap B):\sA_t(f| E)\ge u, E\in\cE_t\}.
\end{aligned}
\end{equation}
\item $\bpeA$ is nonincreasing with respect to sets at $t$ and has property \textup{(R$_t$)}.
\end{enumerate}
Consequently, if condition \textup{(ii)} holds for every $t\in T$, then, for every choice of localizing sets $B_t\in\cE_t^0$,
\begin{equation}\label{eq:param-localization-all}
\Llambda_{\mathfrak S}(f\mI{B_t};t,u)
=
\Llambda_{\mathfrak S}(f;t,u;B_t)
\end{equation}
holds for every $t\in T$, every $u>0$ and every $f\in\bF$.
\end{tw}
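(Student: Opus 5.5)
The plan is to reduce Theorem~\ref{thm:parametric} to Theorem~\ref{thm:main-nonparametric} by freezing the parameter. Fix $t\in T$ and consider the non-parameterized data $\cE:=\cE_t$ and $\bA:=\bA_t$. By the standing assumptions on a parameterized system, $\cE_t$ is an intersection-stable paving and $\bA_t$ is an FCA on it, so the hypotheses of Theorem~\ref{thm:main-nonparametric} are met. Under this identification I will check three correspondences. First, condition (i) of Theorem~\ref{thm:parametric}, quantified over all monotone measures $\nu_t$ on $\cE_t$, is literally condition (i) of Theorem~\ref{thm:main-nonparametric}: the two sides of \eqref{eq:param-S} are $\Llambda_{\nu_t,\bA_t}(f\mI{B},u)$ and $\Llambda_{\nu_t,\bA_t}(f,u;B)$ as defined in \eqref{eq:generalized-level} and \eqref{eq:localized-level}. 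Second, property (R$_t$) is property (R) for $\bA_t$ word for word. Third, nonincreasing with respect to sets at $t$ agrees with the earlier definition.

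The third correspondence is the only point needing care. Definition~\ref{def:param-antitone-R}(i) quantifies over all $C,D\in\cE_t$, not only over $\cE_t^0$. I will check that the extra cases are trivial under the convention $\sA_t(\cdot|\emptyset)=\infty$. If $C=\emptyset$, the inequality reads $\sA_t(f|D)\le\infty$. If $D=\emptyset$, then $C=\emptyset$ as well. So both formulations coincide, and the equivalence (i)$\Leftrightarrow$(ii) follows directly from Theorem~\ref{thm:main-nonparametric}. If one prefers not to rely on the quotation, the proof of that theorem can be rerun verbatim with $\cE_t$ in place of $\cE$. The key constructions are the test measures $\mu_B$ (giving (R)) and $\mu_C=\mI{\{C\subseteq\cdot\}}$ (giving monotonicity), and these use only measures on the single collection $\cE_t$. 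This is the reason no compatibility across parameters is needed.

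For the consequence \eqref{eq:param-localization-all}, I assume (ii) for every $t$ and fix $t$, $u>0$, $f\in\bF$ and $B_t\in\cE_t^0$. I then apply the implication (ii)$\Rightarrow$(i) at this $t$ with the particular monotone measure $\nu_t:=\mu_t$, which is a monotone measure on $\cE_t$ by assumption. By \eqref{eq:param-surface} and \eqref{eq:param-local-surface}, the two sides of \eqref{eq:param-S} become exactly $\Llambda_{\mathfrak S}(f\mI{B_t};t,u)$ and $\Llambda_{\mathfrak S}(f;t,u;B_t)$. Since $t$ was arbitrary and the choice of $B_t$ is free at each $t$, this proves the identity for all $t$.

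I do not expect any real obstacle. The only subtlety is the bookkeeping in the second paragraph: the quantification over possibly empty sets, and the fact that (i) ranges over arbitrary measures $\nu_t$ rather than the fixed $\mu_t$. This matters because the implication (i)$\Rightarrow$(ii) genuinely needs the freedom to choose the test measures $\mu_B$ and $\mu_C$.
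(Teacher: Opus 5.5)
Your proposal is correct and follows the paper's proof exactly. You fix $t$, invoke Theorem~\ref{thm:main-nonparametric} for the intersection-stable collection $\cE_t$ and the FCA $\bA_t$, and obtain the final identity by specializing to $\nu_t=\mu_t$ and $B=B_t$. You also check that Definition~\ref{def:param-antitone-R}(i), which allows empty $C$ or $D$, agrees with the non-parameterized definition under the convention $\sA_t(\cdot|\emptyset)=\infty$; the paper leaves this implicit, and you handle it correctly.
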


\begin{proof}
For fixed $t\in T$, apply Theorem~\ref{thm:main-nonparametric} to the intersection-stable collection $\cE_t$ and the FCA $\bA_t$. This gives the equivalence of \textup{(i)} and \textup{(ii)}. The final assertion follows by taking $\nu_t=\mu_t$ and $B=B_t$.
\end{proof}

\begin{remark}
The parameter $t$ and the level $u$ play different roles. The parameter $t$ selects the underlying collection, the monotone measure and the local evaluation system, while $u$ is the threshold at which the level analysis is performed. If the theory were formulated only on the diagonal $u=t$, then equality of the two suprema would test localization only at that particular level. It would not recover the full monotonicity condition
$$\sA_t(f| D)\le\sA_t(f| C)$$
for arbitrary values of $\sA_t(f|D)$, because the proof of monotonicity in Theorem~\ref{thm:main-nonparametric} tests the localization identity at the level $u=\sA_t(f|D)$ whenever this value is finite and positive. This level need not coincide with the parameter $t$. The two-variable formulation $(t,u)$ is therefore the natural one; the diagonal case is only a specialization.
\end{remark}

\paragraph{Context-dependent knowledge evaluation}
The parameterized framework can be interpreted as a formal model for context-dependent evaluation in knowledge-based systems, in the broad sense of structured symbolic or evidence-based information processing; see, e.g., \cite{BrachmanLevesque2004,vanHarmelen2008}. We use this interpretation in a structural sense. The aim is not to model the full inference mechanism of an arbitrary knowledge-based system, but rather the common situation in which a system evaluates a query, hypothesis, decision alternative or diagnostic state by selecting admissible collections of information units and aggregating their local degrees of support.

Let $X$ denote the universe of elementary information units. Depending on the system, elements of $X$ may represent facts, rules, symptoms, cases, documents, expert statements, criteria or elementary pieces of evidence. A nonnegative function $f\in\bF$ assigns to each unit a score, such as relevance to a query, reliability, compatibility with a hypothesis, degree of activation, confidence, or local support value. A parameter $t\in T$ may represent a query, decision context, resolution level, user profile, time instant or regime of admissibility.

The collection $\cE_t$ represents the admissible knowledge contexts available at parameter value $t$. These contexts are not arbitrary subsets of $X$. They may be constrained by logical consistency, semantic compatibility, observability, modular structure, expert-source restrictions, scenario constraints, or the architecture of a rule base. Thus, a set $E\in\cE_t$ should be read as an information context which is admissible for the current evaluation. The monotone measure $\mu_t$ assigns an informational weight to such a context. This weight may represent support, coverage, importance, confidence, relevance, or a non-additive capacity of the information carried by $E$.

The conditional aggregation operator $\sA_t(\cdot|E)$ evaluates the quality of the score function inside the context $E$. Hence the level condition
$$\sA_t(f|E)\ge u$$
means that the context $E$ passes the local acceptability test at level $u$. The generalized level measure
$$\Llambda_{\mathfrak S}(f;t,u)
=\sup\{\mu_t(E):\sA_t(f|E)\ge u,\ E\in\cE_t\}$$
therefore has a direct selection meaning: it is the maximal informational weight of an admissible context whose aggregated support reaches the required level. In this sense, the generalized level measure is the value of a level-constrained evidence selection problem over admissible knowledge contexts.

Localization enters when the current task makes only a part $B$ of the knowledge base relevant. The set $B$ may be determined by a query filter, a clinical regime, a user profile, a semantic restriction, an active rule module, a trusted source set, or a context window. There are then two natural ways of performing the evaluation. One may first restrict the score function to the relevant part of the knowledge base, replacing $f$ by $f\mI{B}$, and then solve the level-constrained selection problem. Alternatively, one may evaluate contexts in the original system and localize only their informational weights by replacing $E$ with $E\cap B$.

The localization identity
$$\Llambda_{\mathfrak S}(f\mI{B};t,u)
=\Llambda_{\mathfrak S}(f;t,u;B)$$
is therefore not merely a formal equality. It expresses a consistency requirement for context filtering: pre-filtering the knowledge base and post-filtering the selected evidence contexts lead to the same level-based evaluation. In a knowledge-based system, this means that the final score is invariant under the order in which contextual restriction and level-based evidence selection are performed.

Theorem~\ref{thm:parametric} shows that this consistency is governed by two structural mechanisms. Monotonicity with respect to sets says that enlarging an admissible context cannot improve the local lower-level evaluation. This is natural for conjunctive, infimum-type, worst-case and robust support rules, where adding further pieces of information may introduce additional constraints. The reduction property says that after the score has been masked outside $B$, no context which is not contained in $B$ can still pass a positive level test. In knowledge-based terminology, information units outside the active context cannot contribute to a positive localized support value.

This distinction is important. The reduction property need not be a pointwise property of individual information units. In modular knowledge bases, it may be produced by the structure of admissible contexts. Suppose, for example, that the knowledge base is decomposed into modules
$$\mathcal P_t=\{D_{t,1},\ldots,D_{t,k_t}\},$$
where the modules represent rule groups, expert sources, symptom clusters, document classes, scenario blocks, or other semantically coherent information units. If $\cE_t$ consists of unions of such modules, one may first evaluate the score inside each module and then combine the module scores by an outer aggregation with $0$ as an annihilator. If a module lies entirely outside the active context $B$, masking by $\mI{B}$ gives a zero module score, and the outer aggregation propagates this zero to the whole context. Thus, localization consistency may be enforced at the modular level even when a single zero value inside a module does not annihilate the local module evaluation.

The theorem can therefore be read as a structural criterion for reliable context-dependent knowledge evaluation. It identifies when a generalized level-based score is stable under query restriction, removal of irrelevant evidence, or activation of a context window. Such stability is essential in systems where admissible information states are constrained by semantic, logical, modular, or observational structure, and where the evaluation of a hypothesis or decision alternative is based on non-additive aggregation of local support.

\paragraph{A localization-constrained selection problem}
The preceding interpretation leads to a simple selection problem. Let $\mathcal U$ be a nonempty set of hypotheses, configurations or decision alternatives, and suppose that each $a\in\mathcal U$ is represented by a structured support signal $f_a\in\bF$. For a fixed parameter $t$, the value
$$V_t(a,u):=\Llambda_{\mathfrak S}(f_a;t,u)$$
can be read as the largest informational weight of an admissible context on which alternative $a$ reaches the aggregated support level $u$. If only a localized part $B_t\in\cE_t^0$ of the knowledge structure is relevant, the corresponding localized score is
$$V_t^B(a,u):=\Llambda_{\mathfrak S}(f_a;t,u;B_t).$$
The localization theorem gives structural conditions under which the two natural procedures, masking the support signal first or localizing the admissible contexts in the evaluation, lead to the same value. This matters when irrelevant, unreliable, delayed or contextually excluded information units should not affect the final comparison of hypotheses or alternatives.

Fixing a positive level $u$ and a class $\mathcal B_t\subseteq\cE_t^0$ of admissible localization contexts, one obtains
$$\max_{a\in\mathcal U,\,B\in\mathcal B_t}\Llambda_{\mathfrak S}(f_a;t,u;B).$$
Under the assumptions of Theorem~\ref{thm:parametric}, this problem can equivalently be written as
$$\max_{a\in\mathcal U,\,B\in\mathcal B_t}\Llambda_{\mathfrak S}(f_a\mI{B};t,u),$$
so that the selection can be performed either on localized admissible contexts or on masked support signals. The equality is not merely formal: without monotonicity with respect to sets or without the reduction property, the two formulations may rank alternatives differently. Thus, the localization theorem provides a consistency principle for level-based knowledge evaluation with context-dependent feasible information.

\paragraph{Local and global localization sets}

The parameterized localization theorem is local in the parameter. At each parameter value, one may choose a different localizing set $B_t\in\cE_t^0$. If $\bpeA$ is nonincreasing with respect to sets at $t$ and has property \textup{(R$_t$)} for every $t\in T$, then
\begin{equation}\label{eq:local-Btheta}
\Llambda_{\mathfrak S}(f\mI{B_t};t,u)
=
\Llambda_{\mathfrak S}(f;t,u;B_t),
\quad B_t\in\cE_t^0,
\quad u>0.
\end{equation}

If one wants to localize by one fixed set $B$ for all parameter values, then $B$ must belong to all the corresponding collections. Hence the natural domain of global localization is
$$\cE_T^0:=\bigcap_{t\in T}\cE_t^0.$$
If $B\in\cE_T^0$, then \eqref{eq:param-localization-all} gives
\begin{equation}\label{eq:global-B}
\Llambda_{\mathfrak S}(f\mI{B};t,u)
=
\Llambda_{\mathfrak S}(f;t,u;B),
\quad t\in T,
\quad u>0.
\end{equation}
If $\cE_T^0=\emptyset$, then no non-trivial global localization set exists. This is not a defect of the theorem but a structural feature of the parameterized information structures.

\paragraph{Interpretation in stochastic processes}
The framework also admits a natural stochastic interpretation. Let
$$(\Omega,\cF,(\cF_t)_{t\ge0},\mathsf P)$$
be a filtered probability space \cite{Kallenberg} and let $Y=(Y_t)_{t\ge0}$ be a non-negative adapted process. For each time $t$, the sets in $\cF_t$ represent events observable at time $t$. Thus one may take, for instance,
$$
X=\Omega,\qquad \cE_t\subseteq\cF_t,\qquad \mu_t=\mathsf P|_{\cE_t}.
$$
The choice $\cE_t=\cF_t$ corresponds to the full information available at time $t$, while smaller collections $\cE_t$ may represent restricted observable regimes, scenario classes, or coarser information structures.

A first canonical choice is the essential-infimum conditional aggregation rule
$$\sA^{\textrm{essinf}}_t(Z|E)=\essinf_{\omega\in E} Z(\omega),
\quad E\in\cF_t.$$
With the usual convention modulo null sets, the generalized level measure reduces to the ordinary level measure whenever $Z$ is $\cF_t$-measurable:
\begin{equation}\label{eq:stochastic-survival}
\sup\{\mathsf P(E) : 
\essinf_E Z\ge u, E\in\cF_t\} =
\mathsf P(Z\ge u).
\end{equation}
Indeed, the condition $\essinf_E Z\ge u$ is equivalent to $E\subseteq\{Z\ge u\}$ up to a $\mathsf P$-null set; since $\{Z\ge u\}\in\cF_t$, the largest observable choice is the event $\{Z\ge u\}$ itself. Taking $Z=Y_t$ gives
$$\sup\{\mathsf P(E) : 
\essinf_E Y_t\ge u, E\in\cF_t\}
= \mathsf P(Y_t\ge u).$$
If $B\in\cF_t$, then for $u>0$,
\begin{equation}\label{eq:stochastic-local-survival}
\sup\{\mathsf P(E) : 
\essinf_E(Y_t\mI{B})\ge u, E\in\cF_t\}
=
\mathsf P(\{Y_t\ge u\}\cap B).
\end{equation}
Thus, the localization theorem generalizes the elementary fact that the probability of a level event after restriction to an event $B$ is obtained by intersecting the level set with $B$.

A less extreme interpretation is obtained from conditional means over information atoms. Suppose that, for a fixed $t$, the collection $\cE_t$ is generated by a finite $\cF_t$-measurable partition
$\mathcal P_t=\{D_{t,1},\ldots,D_{t,k_t}\}$ with $\mathsf P(D_{t,j})>0$, namely
$$
\cE_t=
\left\{
\bigcup_{j\in J}D_{t,j}:J\subseteq[k_t]
\right\}.
$$
For a non-negative integrable random variable $Z$, define
$$\sA^{\textrm{atom}}_t(Z|E)
:= \min\left\{
\frac{1}{\mathsf P(D_{t,j})}\int_{D_{t,j}} Z\,\mathrm{d}\mathsf P:
D_{t,j}\subseteq E
\right\}, \quad E\in\cE_t^0.$$
The value on the empty set is understood according to the standing convention. This operator evaluates an event $E$ by the worst conditional mean over the information atoms contained in $E$. Hence, the condition
$\sA^{\textrm{atom}}_t(Z|E)\ge u$
means that every information atom included in $E$ has conditional mean at least $u$.

The operator $\sA^{\textrm{atom}}_t$ is nonincreasing with respect to sets, since enlarging $E$ adds further atoms to the minimum. It also has the reduction property. Indeed, if $B,E\in\cE_t^0$ and $E\not\subseteq B$, then, because both $B$ and $E$ are unions of atoms, there exists an atom $D_{t,j}\subseteq E$ such that $D_{t,j}\cap B=\emptyset$. On this atom $Z\mI{B}=0$, and therefore the corresponding conditional mean is zero. Consequently, $\sA^{\textrm{atom}}_t(Z\mI{B}|E)=0.$ Thus, $\sA^{\textrm{atom}}_t$ satisfies the structural assumptions of the localization theorem.

In this case the generalized level measure has a transparent form. Put
$$m_{t,j}(Z):=\frac{1}{\mathsf P(D_{t,j})}\int_{D_{t,j}} Z\,\mathrm{d}\mathsf P. $$
Then
$\sA^{\textrm{atom}}_t(Z|E)\ge u$
holds precisely when all atoms contained in $E$ satisfy $m_{t,j}(Z)\ge u$. Hence
\begin{equation}\label{eq:atom-survival}
\sup\{\mathsf P(E) :  \sA^{\textrm{atom}}_t(Z|E)\ge u, E\in\cE_t\}
= \mathsf P\left(
\bigcup_{\{j:m_{t,j}(Z)\ge u\}}D_{t,j}
\right).
\end{equation}
Equivalently, this is the probability of a level event for the conditional mean of $Z$ with respect to the finite information structure generated by $\mathcal P_t$. If $Z=Y_t$ is measurable with respect to this finite information structure, then the atomwise conditional mean coincides with $Y_t$ on each atom, and \eqref{eq:atom-survival} reduces to the ordinary level-set probability of $Y_t$. If $Z=Y_s$ for $s>t$, then the same expression gives the level-set probability of the forecast of the future value $Y_s$ based on the finite information structure $\mathcal P_t$.

This example shows that the abstract conditional aggregation rule need not be interpreted only as an essential infimum. It may also represent a robust evaluation of conditional averages over information states. The resulting generalized level measure then describes the largest observable event on which the process satisfies a prescribed lower bound in this information-dependent sense.

\section*{Conclusion}

We have studied a localization principle for generalized level measures generated by conditional aggregation operators. The motivation comes from knowledge-based, fuzzy and non-additive systems in which a structured score is evaluated through admissible knowledge contexts and local aggregation rules. The main result shows that the equality between the generalized level measure of the masked signal $f\mI{B}$ and the localized generalized level measure of $f$ is governed by two independent structural requirements. The first one is monotonicity with respect to sets, which controls the passage from a set to its localized part. The second one is the reduction property, which ensures that after multiplication by $\mI{B}$ no nonempty set outside $B$ can still produce a positive level value.

Several mechanisms leading to the reduction property were identified. The pointwise annihilator is the simplest sufficient condition, but it is not necessary in general. In particular, block-generated collections show that localization may hold even when zeros do not annihilate the conditional aggregation rule at the point level. In such cases the reduction is produced by the structure of the collection together with an outer aggregation mechanism which propagates zero values from whole blocks. Point-separating collections represent the opposite situation: there the reduction property is equivalent to the existence of a pointwise annihilator.

The parameterized version of the theory shows that the external parameter and the level threshold must be kept separate. A parameter $t$ may determine the admissible collection, the monotone measure, and the conditional aggregation rule, whereas the level $u$ determines the threshold at which the generalized level measure is evaluated. This leads naturally to a two-variable generalized level surface
$$
\Llambda_{\mathfrak S}(f;t,u)
=
\sup\{\mu_t(E):\sA_t(f|E)\ge u,\ E\in\cE_t\}.
$$
The diagonal choice $u=t$ may be useful in applications, but it is only a specialization of the two-variable formulation. This distinction is essential, since the proof of monotonicity requires testing the localization identity at levels determined by the values of the conditional aggregation rules themselves, not only at prescribed diagonal levels. When $T\subseteq(0,\infty)$, the diagonal
$$
S_f(t):=\Llambda_{\mathfrak S}(f;t,t)
=
\sup\{\mu_t(E):\sA_t(f|E)\ge t,\ E\in\cE_t\}
$$
may be interpreted as a level-dependent evaluation curve. Under the structural assumptions of Theorem~\ref{thm:parametric}, one obtains
$$
S_{f\mI{B}}(t)=S_f^B(t),
$$
where
$$
S_f^B(t):=\Llambda_{\mathfrak S}(f;t,t;B)
=
\sup\{\mu_t(E\cap B):\sA_t(f|E)\ge t,\ E\in\cE_t\},
$$
for all positive levels for which $B$ belongs to the corresponding collection. This is the level-dependent analogue of the classical identity
$$
\mu(\{f\mI{B}\ge t\})=\mu(\{f\ge t\}\cap B).
$$

The knowledge-based reading shows that the same theorem supplies a consistency principle for localized evaluation under structured uncertainty. When hypotheses, configurations or alternatives are represented by structured support signals, the theorem identifies when comparing masked signals is equivalent to comparing localized admissible evidence contexts. In particular, localization-constrained selection problems can be formulated either through localized contexts or through masked input signals, and the two formulations coincide precisely under the structural assumptions isolated in the paper.

The diagonal curves also provide a natural input for Choquet--Stieltjes-type integral functionals. If $\varphi\colon[0,\infty)\to[0,\infty)$ is nondecreasing and $m_\varphi$ is the associated Stieltjes measure, one may formally consider
$$\mathbf{ChS}_{\mathfrak S,\varphi}(f) := \int_{(0,\infty)} S_f(t)\,\md m_\varphi(t),$$
whenever the integral is well-defined. The localization identity then yields the representation
$$\mathbf{ChS}_{\mathfrak S,\varphi}(f\mI{B}) =
\int_{(0,\infty)} S_f^B(t)\,\md m_\varphi(t),$$
provided that $B$ is available at the relevant levels and the integrability assumptions are satisfied. Thus, the localization theorem supplies a structural basis not only for generalized level analysis, but also for level-dependent non-additive integral scores built from localized evaluation information.

The framework also admits a stochastic interpretation. For a filtered probability space and an adapted nonnegative process, sets in the corresponding collections may be interpreted as events observable at a given time, while the conditional aggregation rule may encode an essential lower bound, a robust conditional mean over information blocks, or another local performance criterion. These observations suggest several directions for further work. One may study additional classes of conditional aggregation rules satisfying the two structural conditions, develop almost-sure versions of the localization theorem for measure-theoretic settings, or impose regularity assumptions on the parameterized system in order to obtain measurable and integrable generalized level curves. Such developments would connect the present order-theoretic localization principle with knowledge-based evaluation, fuzzy and non-additive systems, non-additive integration, stochastic-process models, and applications in structured time-dependent data.

\section*{Acknowledgments}
This work was supported by the Slovak Research and Development Agency under contract No.~APVV-21-0468. The second author also acknowledges the support of the internal scientific grant vvgs-2026-3897.

\section*{Disclosure statement}
No potential conflict of interest was reported by the authors.

\section*{Data availability statement}
No datasets were generated or analyzed during the current study.

\end{document}